    \NewDocumentCommand{\citet}{o m}{
      \IfNoValueTF{#1}
        {\citeauthor{#2} (\citeyear{#2})}
        {\citeauthor{#2} (\citeyear[#1]{#2})}
    }
    \NewDocumentCommand{\citep}{o m}{
      \IfNoValueTF{#1}
        {\cite{#2}}
        {\cite[#1]{#2}}
    }
  \renewcommand{\algorithmiccomment}[1]{\hfill \# #1}
  \def\State\STATE
  \def\If\IF
  \def\Then\THEN
  \def\Elsif\ELSIF
  \def\Else\ELSE
  \def\Endif\ENDIF
  \def\For\FOR
  \def\Forall\FORALL
  \def\Do\DO
  \def\Endfor\ENDFOR
  \def\While\WHILE
  \def\Endwhile\ENDWHILE
  \def\Repeat\REPEAT
  \def\Until\UNTIL
  \def\Return\RETURN
  \def\Require\REQUIRE
  \def\Ensure\ENSURE
  \def\Comment\COMMENT
  \algrenewcommand\algorithmicindent{0.7em}
  \algrenewcommand{\algorithmiccomment}[1]{\hfill \# #1}
  \def\STATE\State
  \def\IF\If
  \def\THEN\Then
  \def\ELSIF\ElsIf
  \def\ELSE\Else
  \def\ENDIF\EndIf
  \def\FOR\For
  \def\FORALL\ForAll
  \def\DO\Do
  \def\ENDFOR\EndFor
  \def\WHILE\While
  \def\ENDWHILE\EndWhile
  \def\REPEAT\Repeat
  \def\UNTIL\Until
  \def\RETURN\Return
  \def\REQUIRE\Require
  \def\ENSURE\Ensure
  \def\COMMENT\Comment
\def\eqref#1{equation~\ref{#1}}
\def\1{\bm{1}}
\def\0{\bm{0}}
\def\rh{{\textnormal{h}}}
\def\rr{{\textnormal{r}}}
\def\rx{{\textnormal{x}}}
\def\ry{{\textnormal{y}}}
\DeclareMathAlphabet{\mathsfit}{\encodingdefault}{\sfdefault}{m}{sl}
\SetMathAlphabet{\mathsfit}{bold}{\encodingdefault}{\sfdefault}{bx}{n}
\def\E{{\mathbb{E}}}
\def\N{{\mathcal{N}}}
\def\R{{\mathbb{R}}}
\def\Z{{\mathbb{Z}}}
\newcommand{\Var}{\mathrm{Var}}
\DeclareMathOperator*{\argmin}{arg\,min}
\newcommand{\brackets}[1]{{\left<#1\right>}}
\newcommand{\braces}[1]{{\left\{#1\right\}}}
\newcommand{\parens}[1]{{\left(#1\right)}}
\newcommand{\dbrackets}[1]{{\left\llbracket#1\right\rrbracket}}
\newcommand{\then}{\therefore \qquad}
\NewDocumentCommand{\diffby}{s m O{}}{
 \IfBooleanTF{#1}
  {\frac{\partial#3}{\partial#2}}
  {\frac{d#3}{d#2}}
}
\newcommand{\satisfies}{\vDash}
\RenewDocumentCommand{\to}{o o}{
 \IfNoValueTF{#1}
  {\rightarrow}
  {\IfNoValueTF{#2}
   {\xrightarrow{#1}}
   {\xrightarrow[#2]{#1}}}
}
\NewDocumentCommand{\affect}{o o}{
 \IfNoValueTF{#1}
  {\rightsquigarrow}
  {
   \IfNoValueTF{#2}
   {\rightsquigarrow^{#1}}
   {\rightsquigarrow^{#1}_{#2}}
  }
}
\renewcommand{\then}{\Rightarrow}
\newcommand{\iid}{i.i.d.\xspace}
\newcommand{\gp}{\mathrm{GP}}
\newcommand{\todo}[1]{\iftoggle{dev}{\red{\textbf{#1}}}{}}
\newcommand\tsup[2][2]{
 \def\useanchorwidth{T}
  \ifnum#1>1
    \stackon[-.5pt]{\tsup[\numexpr#1-1\relax]{#2}}{\scriptscriptstyle\sim}
  \else
    \stackon[.5pt]{#2}{\scriptscriptstyle\sim}
  \fi
}
\newcommand{\function}[1]{\textsc{#1}}
\newcommand{\mycolor}[2]{\textcolor{#1}{#2}}
\newcommand{\red}[1]{\mycolor{red}{#1}}
\def\_{\\[-0.3em]}
\newlength{\maxwidth}
\newtheorem{defi}{Definition}
\newtheorem{theo}{Theorem}
\let\@myref\ref
\renewcommand{\ref}[1]{\@myref{#1}\iftoggle{dev}{\todo{(Do not use ``ref'' directly!)}}{}}
\newcommand{\refsec}[1]{Sec.\,\@myref{#1}}
\newcommand{\refseq}[1]{Sec.\,\@myref{#1}}
\newcommand{\refig}[1]{Fig.\,\@myref{#1}}
\newcommand{\reftbl}[1]{Table \@myref{#1}}
\newcommand{\refstep}[1]{Step \@myref{#1}}
\newcommand{\refalgo}[1]{Alg.\,\@myref{#1}}
\newcommand{\refchap}[1]{Chap.\,\@myref{#1}}
\newcommand{\reflst}[1]{List \@myref{#1}}
\newcommand{\refeq}[1]{Eq.\,\@myref{#1}} 
\newcommand{\reftheo}[1]{Thm.\,\@myref{#1}}
\newcommand{\refline}[1]{line\,\@myref{#1}}
\newcommand{\refdef}[1]{Def.\, \@myref{#1}}
\newcommand{\refex}[1]{Example\,\@myref{#1}}
\newcommand{\refconv}[1]{Conv.\,\@myref{#1}}
\newcommand{\reffact}[1]{Fact.\,\@myref{#1}}
\newcommand{\reflemma}[1]{Lemma.\,\@myref{#1}}
\newcommand{\refcorol}[1]{Col.\,\@myref{#1}}
\newcommand{\refsecs}[2]{Sec.\,\@myref{#1}-\@myref{#2}}
\newcommand{\refseqs}[2]{Sec.\,\@myref{#1}-\@myref{#2}}
\newcommand{\refigs}[2]{Fig.\,\@myref{#1}-\@myref{#2}}
\newcommand{\reftbls}[2]{Tables \@myref{#1}-\@myref{#2}}
\newcommand{\refsteps}[2]{Steps \@myref{#1}-\@myref{#2}}
\newcommand{\refalgos}[2]{Alg.\,\@myref{#1}-\@myref{#2}}
\newcommand{\refchaps}[2]{Chap.\,\@myref{#1}-\@myref{#2}}
\newcommand{\reflsts}[2]{Lists \@myref{#1}-\@myref{#2}}
\newcommand{\refeqs}[2]{Eq.\,\@myref{#1}-\@myref{#2}}
\newcommand{\refpages}[2]{p.\pageref{#1}-\@myref{#2}}
\newcommand{\reftheos}[2]{Thm.\,\@myref{#1}-\@myref{#2}}
\newcommand{\reflines}[2]{line\,\@myref{#1}-\@myref{#2}}
\newcommand{\refdefs}[2]{Def.\,\@myref{#1}-\@myref{#2}}
\newcommand{\refexs}[2]{Example\,\@myref{#1}-\@myref{#2}}
\newcommand{\refconvs}[2]{Conv.\,\@myref{#1}-\@myref{#2}}
\newcommand{\reffacts}[2]{Facts.\,\@myref{#1}-\@myref{#2}}
\newcommand{\reflemmas}[2]{Lemma.\,\@myref{#1}-\@myref{#2}}
\newcommand{\refcorols}[2]{Col.\,\@myref{#1}-\@myref{#2}}
\newcounter{list}[section]
\newcommand{\pre}{\function{pre}}
\newcommand{\adde}{\function{add}}
\newcommand{\dele}{\function{del}}
\newcommand{\cost}{\function{cost}}
\def\hash{\text{\relsize{-1}\#}}
\newcommand{\ar}[1]{\hash{}#1}
\newcommand{\lsota}{state-of-the-art\xspace}  
\newcommand{\astar}{\ifmmode{A^*}\else{A$^*$}\fi\xspace}
\newcommand{\gbfs}{\ifmmode{\mathrm{GBFS}}\else{GBFS}\fi\xspace}
\NewDocumentCommand{\uct}{s}{\ifmmode{\mathrm{UCT}{\IfBooleanT{#1}{^*}}}\else{UCT{\IfBooleanT{#1}{*}}}\fi\xspace}
\NewDocumentCommand{\guct}{s}{\ifmmode{\mathrm{GUCT}{\IfBooleanT{#1}{^*}}}\else{GUCT{\IfBooleanT{#1}{*}}}\fi\xspace}
\newcommand{\topen}{tree-based open list\xspace}
\newcommand{\newheuristic}[2]{
 \def#1{
  \relax\ifmmode
  h^\mathrm{#2}\xspace
  \else
  \text{#2}\xspace
  \fi
 }
}
\newheuristic{\lmcut}{LMcut}
\newheuristic{\mands}{M\&S}
\newheuristic{\pdb}{PDB}
\newheuristic{\ff}{FF}
\newheuristic{\ce}{CEA}
\newheuristic{\cg}{CG}
\newheuristic{\gc}{GC}
\newheuristic{\ad}{add}
\newheuristic{\hmax}{max}
\newheuristic{\lc}{LC}
\newheuristic{\blind}{blind}
\newcommand{\newlearnedheuristic}[2]{
 \def#1{
  \relax\ifmmode
  H^\mathrm{#2}\xspace
  \else
  \text{#2}\xspace
  \fi
 }
}
\newlearnedheuristic{\Hlmcut}{LMcut}
\newlearnedheuristic{\Hmands}{M\&S}
\newlearnedheuristic{\Hpdb}{PDB}
\newlearnedheuristic{\Hff}{FF}
\newlearnedheuristic{\Hce}{CEA}
\newlearnedheuristic{\Hcg}{CG}
\newlearnedheuristic{\Had}{add}
\newlearnedheuristic{\Hmax}{max}
\newlearnedheuristic{\Hlc}{LC}
\newlearnedheuristic{\Hblind}{blind}
\newcommand{\newUnitCostHeuristic}[2]{
 \def#1{
  \relax\ifmmode
  \hat{h}^\mathrm{#2}\xspace
  \else
  \text{#2}\xspace
  \fi
 }
}
\newUnitCostHeuristic{\lmcuto}{LMcut}
\newUnitCostHeuristic{\mandso}{M\&S}
\newUnitCostHeuristic{\ffo}{FF}
\newUnitCostHeuristic{\ceo}{CEA}
\newUnitCostHeuristic{\cgo}{CG}
\newUnitCostHeuristic{\ado}{add}
\newUnitCostHeuristic{\gco}{GoalCount}
\newUnitCostHeuristic{\lco}{LC}
\newcommand{\newrandomheuristic}[2]{
 \def#1{
  \ifmmode
  \rh^\mathrm{#2}\xspace
  \else
  \text{#2}\xspace
  \fi
 }
}
\newrandomheuristic{\rlmcut}{LMcut}
\newrandomheuristic{\rmands}{M\&S}
\newrandomheuristic{\rpdb}{PDB}
\newrandomheuristic{\rff}{FF}
\newrandomheuristic{\rce}{CEA}
\newrandomheuristic{\rcg}{CG}
\newrandomheuristic{\rad}{add}
\newrandomheuristic{\rhmax}{max}
\newrandomheuristic{\rlc}{LC}
\def\strips@initialize{
\def\@transitiononly{0}
\def\@conditiontype{0}
\def\@usecondeffect{0}
\def\@cost{0}
\def\@useaxiom{0}
\def\@lifted{0}
\def\@track{0}
}
\def\conditionset{
\if\@useaxiom0
P
\else
P\cup P_X
\fi
}
\let\satisfies@orig\satisfies
\def\satisfies{
\if\@conditiontype0
\supseteq
\else
\satisfies@orig
\fi
}
\def\condition{
\if\@conditiontype0
\conditionset
\else
\mathcal{F}(\conditionset)
\fi
}
\def\ga{
\if\@lifted0
a
\else
a^{\dagger}
\fi
}
\def\applyformula{
\if\@usecondeffect0
(s \setminus \dele(a)) \cup \adde(a)
\else
(s
 \setminus \braces{e \mid (c \triangleright e) \in \dele(\ga), c\satisfies s})
 \cup      \braces{e \mid (c \triangleright e) \in \adde(\ga), c\satisfies s}
\fi
}
\NewDocumentCommand{\strips}{O{}}{
\strips@initialize
\setkeys{strips}{#1}
\if\@lifted1
  \strips@propositional\par
  \strips@lifted
\else
  \strips@propositional
\fi
}
\newcommand{\strips@propositional}{
\if\@conditiontype1
Given a set of propositional variables $V$,
let $\mathcal{F}(V)$ be a propositional formula consisting of $V$ and
logical operations $\braces{\land,\lnot}$.
\fi
\if\@conditiontype2
Given a set of propositional variables $V$,
let $\mathcal{F}(V)$ be a propositional formula consisting of $V$ and
logical operations $\braces{\land,\lor,\lnot}$.
\fi
\if\@useaxiom0
We define a propositional STRIPS Planning problem
as a 4-tuple $\brackets{P,A,I,G}$
where
 $P$ is a set of propositional variables,
 $A$ is a set of actions,
 $I\subseteq P$ is the initial state, and
 $G\subseteq \conditionset$ is a goal condition.
\else
We define a propositional STRIPS Planning problem
as a 6-tuple $\brackets{P,A,X,P_X,I,G}$
where
 $P$ is a set of propositions,
 $A$ is a set of actions,
 $X$ is a set of axioms,
 $P_X$ is a set of derived propositions ($P\cap P_X=\emptyset$),
 $I\subseteq P$ is the initial state, and
 $G\subseteq \conditionset$ is a goal condition.
\fi
\ifnumcomp{\@transitiononly}{>}{0}{
We omit the details of action applications as they are not important in this paper.
It suffices to say an action $a\in A$ transitions from a state $s\subseteq P$ to a successor $s'=a(s)\subseteq P$.
}{
\ifnumcomp{\@cost}{<}{1}{
Each action $a\in A$ is a 3-tuple $\brackets{\pre(a),\adde(a),\dele(a)}$ where
}{
Each action $a\in A$ is a 4-tuple $\brackets{\pre(a),\adde(a),\dele(a),\cost(a)}$ where
$\cost(a) \in \Z^{0+}$ is a cost\ifnumcomp{\@cost}{=}{2}{ (We assume unit-cost: $\forall a\in A; \cost(a)=1$)}{},
}
$\pre(a) \subseteq \condition$ is a precondition and
\if\@usecondeffect0
$\adde(a), \dele(a)\subseteq P$ are the add-effects and delete-effects.
\else
$\adde(a), \dele(a)$ are the add-effects and delete-effects.
Each effect is denoted as $c \triangleright e$ where
$c \in \condition$ is an \emph{effect condition} and
$e \in P$.
\fi
\if\@useaxiom1
The set of axioms $X$ consists of clauses $f \Rightarrow p$ where
$f \in \condition$ is a body and $p \in P_X$ is a head.
\fi
A state $s\subseteq \conditionset$ is a set of true propositions
(all of $P\setminus s$ is false),
an action $a$ is \emph{applicable} when $s \satisfies \pre(a)$ (read: $s$ \emph{satisfies} $\pre(a)$),
and applying action $a$ to $s$ yields a new successor state
\if\@useaxiom0
$a(s) = \applyformula$.
\else
$a(s)$.
To compute $a(s)$, we first obtain a non-derived state
$s' \gets \applyformula \setminus P_X $.
Then we perform a fix-point calculation
$s' \gets s' \cup \braces{p \in P_X \mid (f\Rightarrow p)\in X \land s \satisfies f}$.
\fi
\par
}                               
The task of classical planning is to find a sequence of actions called a \emph{plan} $(\ga_1,\cdots,\ga_n)$
where, for $1\leq t\leq n$,
 $s_0=I$,
 \ifnumcomp{\@transitiononly}{>}{0}{}{$s_t\satisfies \pre(a_{t+1})$,}
 $s_{t+1}=a_{t+1}(s_t)$,
 and $s_n\satisfies G$.
\ifnumcomp{\@track}{>}{0}{
 A plan is \emph{optimal} if
 \ifnumcomp{\@cost}{<}{1}{
   there is no shorter plan.
 }{
   there is no plan with a lower cost $\sum_t \cost(a_t)$.
 }
 \ifnumcomp{\@track}{>}{1}{
   A plan is otherwise called \emph{satisficing}.
   \ifnumcomp{\@track}{>}{2}{
     A problem setting that completely ignores the solution quality is called an \emph{agile} setting,
     while a \emph{satisficing} setting implies that the solver still attempts to find a
     \ifnumcomp{\@cost}{<}{1}{shorter}{cheaper}
     plan.
     This paper focuses on the \emph{agile} setting.
   }{
     This paper focuses on the \emph{satisficing} setting.
   }
 }{}
}{}
}
\newcommand{\strips@lifted}{
In \emph{Lifted STRIPS}, each propositional variable is an \emph{instantiation}/\emph{grounding} of
a first-order logic predicate.
Each predicate $p(x_1,\ldots,x_{\ar{p}})$ is parameterized by a list of parameters/variables/arguments $X=(x_1,\ldots,x_{\ar{p}})$,
where $\ar{p}$ is an \emph{arity} of $p$.
A proposition is obtained by substituting each $x_i$ with an \emph{object} in a set $O$.
Each $p$ therefore has $O^{\ar{p}}$ instantiations.
Similarly, each action $a\in A$ is now called a \emph{ground action},
which is an instantiation of a \emph{lifted action} $a(x_1,\ldots,x_{\ar{p}})$ parameterized by $\ar{a}$ parameters.
A ground action is obtained by substituting the arguments as well as
the parameters used in the preconditions and the effects.
}
\long\def\addto#1#2#3{
  \ifinlist{#3}{#1}{
  }{
    \listadd{#1}{#3}
    \ifdefempty#2{
     \expandafter\def\expandafter#2\expandafter{#2#3}
    }{
     \expandafter\def\expandafter#2\expandafter{#2,#3}
    }
  }
}
 \def\heuristics@ff{#1}
 \ifnumcomp{\heuristics@ff}{>}{0}{
  \addto{\heuristiclist}{\heuristicstr}{\ff}
  \addto{\heuristiccitelist}{\heuristiccitestr}{hoffmann01}
 }{}
 \def\heuristics@ad{#1}
 \ifnumcomp{\heuristics@ad}{>}{0}{
  \addto{\heuristiclist}{\heuristicstr}{\ad}
  \addto{\heuristiccitelist}{\heuristiccitestr}{bonet2001planning}
 }{}
 \def\heuristics@hmax{#1}
 \ifnumcomp{\heuristics@hmax}{>}{0}{
  \addto{\heuristiclist}{\heuristicstr}{\hmax}
  \addto{\heuristiccitelist}{\heuristiccitestr}{bonet2001planning}
 }{}
 \def\heuristics@gc{#1}
 \ifnumcomp{\heuristics@gc}{>}{0}{
  \addto{\heuristiclist}{\heuristicstr}{\gc}
  \addto{\heuristiccitelist}{\heuristiccitestr}{FikesHN72}
 }{}
 \def\heuristics@cea{#1}
 \ifnumcomp{\heuristics@cea}{>}{0}{
  \addto{\heuristiclist}{\heuristicstr}{\ce}
  \addto{\heuristiccitelist}{\heuristiccitestr}{helmert2008unifying}
 }{}
 \def\heuristics@cg{#1}
 \ifnumcomp{\heuristics@cg}{>}{0}{
  \addto{\heuristiclist}{\heuristicstr}{\cg}
  \addto{\heuristiccitelist}{\heuristiccitestr}{Helmert04}
 }{}
\NewDocumentCommand{\heuristics}{O{}}{
\def\heuristics@ff{0}
\def\heuristics@ad{0}
\def\heuristics@hmax{0}
\def\heuristics@gc{0}
\def\heuristics@cea{0}
\def\heuristics@cg{0}
\def\heuristics@relaxation{0}
\def\heuristics@simplified{0}
\def\heuristics@properties{0}
\def\heuristics@expansion{0}
\def\heuristiclist{}
\def\heuristiccitelist{}
\def\heuristicstr{}
\def\heuristiccitestr{}
\setkeys{heuristics}{#1}
\ifnumcomp{\heuristics@simplified}{>}{0}{
A domain-independent heuristic function $h(s)$
returns an estimate of the cumulative cost from a state $s$ to one of the goal states (states that satisfy $G$).
}{
Given a problem $\brackets{P,A,I,G}$ and a state $s$,
a domain-independent heuristic function $h(s, \brackets{P,A,I,G})$
returns an estimate of the cumulative cost from $s$ to one of the goal states (states that satisfy $G$),
typically through a symbolic, non-statistical means including problem relaxation and abstraction.
It is often abbreviated as $h(s)$ or $h(s,G)$.
}
\ifdefempty\heuristiclist{}{
Notable \lsota functions that appear in this paper includes
$\heuristicstr$ \citep{\heuristiccitestr}.
}
\ifnumcomp{\heuristics@properties}{<}{1}{}{
  Often, the true optimal cost from a state $s$ is called
  the \emph{perfect heuristics} $h^*(s)$ \citep{helmert2008good}.
  \ifnumcomp{\heuristics@properties}{<}{2}{}{
    \emph{Admissible} heuristics are those which never overestimate $h^*$,
    i.e., $\forall s; h(s)\leq h^*(s)$.
    Optimizing algorithms like \astar \citep{hart1968formal} are
    guaranteed to find the optimal solutions with such heuristics.
    \ifnumcomp{\heuristics@expansion}{<}{1}{}{
      Moreover, \astar is the optimal expansion algorithm, i,e.,
      expands the fewest nodes among all algorithms under the same admissible $h$.
    }
    \ifnumcomp{\heuristics@properties}{<}{3}{}{
      Otherwise they are called \emph{inadmissible} heuristics,
      and are typically combined with satisficing/agile algorithms like GBFS \citep{doran1966experiments,bonet2001planning}.
      \ifnumcomp{\heuristics@properties}{<}{4}{}{
        Furthermore, heuristics that preserve the same ordering as $h^*$ are called
        \emph{perfect satisficing heuristics} $h^\leq$ \citep{kuroiwa2022biased},
        i.e., $\forall s,t; h^\leq(s)\leq h^\leq(t)\then h^*(s)\leq h^*(t)$.
        \ifnumcomp{\heuristics@expansion}{<}{1}{}{
          GBFS is the optimal expansion algorithm under $h^\leq$.
        }
        \ifnumcomp{\heuristics@properties}{<}{5}{}{
          Given a monotonic \emph{inflation} function $t:\R^{0+}\to\R^{0+}$
          s.t. $\forall x; t(x)\geq x$ and $\forall x,y; x\geq y \then t(x)\geq t(y)$,
          heuristics that preserve the same ordering as $h^*$ when inflated are called
          \emph{$t$-dominating heuristics},
          i.e., $\forall s,t; t(h(s))\leq h(t)\then h^*(s)\leq h^*(t)$.
        }
      }
    }
  }
}

\if\heuristics@relaxation1
A significant class of heuristics is called delete relaxation heuristics,
which solve a relaxed problem which does not contain delete effects,
and then returns the cost of the solution of the relaxed problem as an output.
The cost of the optimal solution of a delete relaxed planning problem from a state $s$ is
denoted by $h^+(s)$, but this is too expensive to compute in practice (NP-complete) \citep{bylander1996}.
Therefore, practical heuristics typically try to obtain its further relaxations
that can be computed in polynomial time.
\fi

\ifnumcomp{\heuristics@hmax}{>}{1}{
Max heuristics $\hmax$ \citep{bonet2001planning} is recursively defined as follows:
\begin{align}
 \hmax(s,G) = \max_{p\in G}
 \left\{
  \begin{array}{l}
   0\ \text{if}\ p\in s.\ \text{Otherwise,}\\
   \min_{\braces{a\in A\mid p\in\adde(a)}} \\
    \quad \left[\cost(a)+\ad(s, \pre(a))\right].
  \end{array}
 \right.
\end{align}
}{}

\ifnumcomp{\heuristics@ad}{>}{1}{
Additive heuristics $\ad$ \citep{bonet2001planning} is recursively defined as follows:
\begin{align}
 \ad(s,G) = \sum_{p\in G}
 \left\{
  \begin{array}{l}
   0\ \text{if}\ p\in s.\ \text{Otherwise,}\\
   \min_{\braces{a\in A\mid p\in\adde(a)}} \\
    \quad \left[\cost(a)+\ad(s, \pre(a))\right].
  \end{array}
 \right.
\end{align}
}{}

\ifnumcomp{\heuristics@ff}{>}{1}{
FF heuristics $\ff$ \citep{hoffmann01} is defined based on another heuristics $h$, such as $h=\ad$, as a subprocedure.
For each proposition $p$,
the action $a$ that adds $p$ with the minimal $\cost(a)+h(s, \pre(a))$
is conceptually ``the cheapest action that achieves a subgoal $p$'',
called the \emph{cheapest achiever} / \emph{best supporter} $\text{bs}(p,s,h)$ of $p$.
Using this, $\ff$ is defined as the sum of actions in a relaxed plan $\Pi^+$ constructed as follows:
\begin{align}
 \ff(s,G,h) &= \sum_{a\in \Pi^+(s,G,h)} \cost(a)\\
 \Pi^+(s,G,h) &= \bigcup_{p\in G}
 \left\{
  \begin{array}{l}
   \emptyset\ \text{if}\ p\in s.\ \text{Otherwise,}\\
   \braces{a} \cup \Pi^+(s,\pre(a)) \\
   \qquad \text{where}\ a=\text{bs}(p,s,h).
  \end{array}
 \right.\\
 \text{bs}(p,s,h)&=\argmin_{\braces{a\in A\mid p\in \adde(a)}} \left[\cost(a)+h(s, \pre(a))\right].
\end{align}
\ifnumcomp{\heuristics@ff}{>}{2}{
  In practice, $\ff$ can be implemented in several ways, each producing different values
  due to the tie-breaking difference in the $\argmin$ in $\text{bs}(p,s,h)$.
  \citet{kuroiwa2019case} showed that Graphplan-based implementation yields the best planner performance
  due to the combination of low-level efficiency and heuristic accuracy.
}{}
}{}

\ifnumcomp{\heuristics@gc}{>}{1}{
Goal Count heuristics $\gc$ is a simple heuristic proposed in \citep{FikesHN72}
that counts the number of propositions that are not satisfied yet.
$\brackets{\text{condition}}$ is a cronecker's delta / indicator function that returns 1 when the condition is satisfied.
\begin{align}
 \gc(s,G) &= \sum_{p\in G} \dbrackets{p\not \in s}.
\end{align}
}{}
}
\title{Extreme Value Monte Carlo Tree Search for Classical Planning}
\author{
Masataro Asai\equalcontrib\textsuperscript{\rm 1},
Stephen Wissow\equalcontrib\textsuperscript{\rm 2}
}
\begin{document}

\maketitle

\begin{abstract}
Despite being successful in board games and reinforcement learning (RL),
Monte Carlo Tree Search (MCTS) combined with Multi-Armed Bandits (MABs)
has seen limited success in domain-independent classical planning until recently.
Previous work \citep{wissow2024scale} showed that UCB1, designed for bounded rewards, does not perform well
as applied to cost-to-go estimates in classical planning, which are unbounded in $\R$,
and showed improved performance using a Gaussian reward MAB instead.
This paper further sharpens our understanding of ideal bandits for planning tasks.
Existing work has two issues:
first, Gaussian MABs under-specify the support of cost-to-go estimates as $(-\infty,\infty)$,
which we can narrow down.
Second, Full Bellman backup \citep{schulte2014balancing},
which backpropagates sample max/min, lacks theoretical justification.
We use \emph{Peaks-Over-Threashold Extreme Value Theory} to resolve both issues at once,
and propose a new bandit algorithm (UCB1-Uniform).
We formally prove its regret bound and
empirically demonstrate its performance in classical planning.
\end{abstract}

\section{Introduction}
\label{sec:intro}

A recent breakthrough \citep{wissow2024scale} in Monte Carlo Tree Search (MCTS) combined with Multi-Armed Bandit (MAB)
demonstrated that a better theoretical understanding of bandit-based algorithms
can significantly improve search performance in classical planning \citep{FikesHN72}.
Building upon the Trial-Based Heuristic Tree Search (THTS) framework \citep{schulte2014balancing}, \citet{wissow2024scale} showed why the UCB1 bandit \citep{auer2002finite} does not perform well in classical planning:
UCB1 assumes a reward distribution with a known, fixed, finite support (a mathematical term for a defined range
such as $[0,1]$) that is shared by all arms,
incorrectly assuming that cost-to-go estimates (heuristic values) always fall in this particular range.
They then proposed UCB1-Normal2 bandit that assumes a Gaussian reward distribution
which has an infinite support $(-\infty,\infty)$ that is impossible to violate,
and has a regret bound that can become constant when applied to deterministic state space search, as in classical planning.

We build on these advances to further our understanding of the strengths and requirements of MABs as applied to heuristic search, and in particular to
resolve two theoretical issues in previous work in this area.
The first is UCB1-Normal2's assumption that
cost-to-go estimates fall anywhere in $(-\infty,\infty)$, which is an under-specification that can be narrowed down to $[0,\infty)$ or even further.
The second is the insufficient statistical characterization of \emph{extrema} (maximum/minimum)
in so-called \emph{Full Bellman} backup \citep{schulte2014balancing}
that backpropagates the smallest/largest mean among the arms.
\citeauthor{schulte2014balancing} informally criticized
the use of averages in UCT as ``rather odd'' for planning,
but without bandit-theoretic justifications.

This paper introduces Extreme Value Theory \citep[EVT]{beirlant2004statistics,dehaan2006extreme}
as the statistical foundation for understanding general optimization tasks.
EVTs are designed to model the statistics of extrema of distributions
using the \emph{Extremal Limit Theorems},
unlike most statistical literature that models the \emph{average} behavior
based on the \emph{Central Limit Theorem} \citep[CLT]{laplace1812centrallimittheorem}.
Among branches of EVTs, we identified \emph{Peaks-Over-Threashold EVT} \citep{pickands1975statistical,balkema1974residual}
as our primary tool for designing new algorithms,
leading us to the Generalized Pareto (GP) distribution,
which plays the same role in EVT as the Gaussian distribution does in the CLT.
Based on this framework,
we propose a novel MAB algorithm called UCB1-Uniform
for heuristic search applied to classical planning,
using the fact that the Uniform distribution is a special case of the GP distribution
to avoid the numerical difficulty of estimating the latter's parameters.
We propose a novel heuristic search algorithm for classical planning, GreedyUCT-Uniform (\guct-Uniform), an MCTS that leverages UCB1-Uniform.

We compared \guct-Uniform's performance against
various existing bandit-based MCTS algorithms,
traditional Greedy Best First Search \citep[\gbfs]{bonet2001planning,doran1966experiments}, and a
\lsota diversified search algorithm called Softmin-Type(h) \citep{kuroiwa2022biased}.
The results showed that our algorithm outperforms existing \lsota algorithms across diverse heuristics.
For example,
under the same evaluation budget of $10^4$ nodes with the $\ff$ heuristic \citep{bonet2001planning},
\guct-Uniform solved
67.8, 23.4, and 33.2 more instances than
GBFS, \guct-Normal2, and Softmin-Type(h), respectively.
\guct-Uniform also significantly outperformed MCTS variants combined with Max-$k$ bandits \citep{cicirello2004heuristic},
a bandit paradigm whose objective differs significantly from those of classical planning.
MCTS combined with Max-$k$ bandits
(MaxSearch \citep{streeter2006simple}, RobustUCT \citep{bubeck2013bandits}, and Threshold Ascent \citep{kikkawa2022materials})
performed poorly in the classical planning task,
outperformed by \guct-Uniform by more than 300 instances.
Our code is published at \url{github.com/IBM/pyperplan-mcts-public}.
A full version of the paper with appendix is on arxiv:2405.18248.

\section{Preliminaries}
\label{sec:background}

\label{sec:classical-planning}

\strips[transition-only,agile]

A domain-independent heuristic function $h$ in classical planning is
a function of a state $s$ and the problem $\brackets{P,A,I,G}$,
though the notation $h(s)$ usually omits the latter, that
returns an estimate of the cumulative cost of a sequence of actions transitioning from $s$ to a goal state $s_g \satisfies G$.
Details of specific heuristic functions are beyond the scope of this paper, and are included in the appendix.

\subsection{Multi-Armed Bandit (MAB)}

MAB \citep{thompson1933likelihood,robbins1952some,bush1953stochastic} is the problem of
finding the best strategy to choose from multiple unknown reward distributions.
It is typically depicted by a row of $K$ slot machines each with a lever or `arm.'
Each time the player pulls an arm (a \emph{trial}),
they receive a reward sampled from that arm's reward distribution.
Through multiple trials, the player discovers the arms' distributions and selects arms to maximize the reward.

The most common optimization objective of MAB is \emph{Cumulative Regret} (CR) minimization.
Let $\rr_{it}$ ($1\leq i \leq K$) be a random variable (RV) for the reward received from the $t$-th pull of an arm $i$.
$\rr_{it}$ follows an unknown \emph{reward distribution} $p(\rr_{i})$ which stays the same over $t$.
Let $t_{i}$ be the number of pulls on arm $i$
when $T=\sum_i t_{i}$ pulls are performed in total.
\begin{defi}
 Let $I_t$ be the arm pulled at $t$.
 The \emph{cumulative regret} $\Delta$ is the gap between the optimal and the actual expected cumulative reward:
 $\textstyle\Delta=\max_i \E[\sum_{t=1}^T \rr_{it}] - \E[\sum_{t=1}^T \rr_{I_tt}]$.
 \label{def:cr}
\end{defi}
A regret bound indicates the \emph{speed} of convergence.
Algorithms with a logarithmically upper-bounded regret, $O(\log T)$,
are called \emph{asymptotically optimal} because
this is the theoretical optimum achievable by any algorithm \citep{lai1985asymptotically}.

Upper Confidence Bound 1 \citep[UCB1]{auer2002finite} is
a logarithmic CR MAB for rewards $\rr_i\in [0,c]$ with a known $c$.
Let $r_{i1},\ldots, r_{it_i}\sim p(\rr_i)$ be $t_i$ \iid samples obtained from an arm $i$.
Let $\hat{\mu}_i=\frac{1}{t_i}\sum_{j=1}^{t_i} r_{ij}$.
To minimize CR,
UCB1 selects $i$ with the largest Upper Confidence Bound value $\text{UCB1}_i$:
\begin{align}
 \begin{split}
  \text{UCB1}_i &\textstyle= {\hat{\mu}_i + c\sqrt{{(2\log T)}/{t_i}}}\\
  \text{LCB1}_i &\textstyle= {\hat{\mu}_i - c\sqrt{{(2\log T)}/{t_i}}}
 \end{split}
 \label{eq:ucb1}
\end{align}
For reward (cost) minimization,
we can select $i$ with the smallest $\text{LCB1}_i$ value defined above
(e.g., in \citet{kishimoto2022bandit}),
but we may use the terms U/LCB1 interchangeably.

U/LCB1's second term is often called an \emph{exploration term}.
In practice,
$c$ is often set heuristically as a hyperparameter and referred to as the \emph{exploration rate},
ignoring the original theoretical meaning as the upper limit of support $[0,c]$.

U/LCB1 refers to a specific algorithm proposed by \citet{auer2002finite}, 
while U/LCB refers to general upper/lower confidence bounds of random variables.
Often an LCB subtracts the exploration term instead of adding it as in a UCB.

\subsection{Forward Heuristic Best-First Search}
\label{sec:mcts}

Classical planning problems are typically solved as a path finding problem
defined over a state space graph induced by the transition rules,
and the current dominant approach is based on \emph{forward search}.
Forward search maintains a set of search nodes called an \emph{open list},
and repeatedly
(1) (\emph{selection}) selects a node from the open list,
(2) (\emph{expansion}) generates its successor nodes,
(3) (\emph{evaluation}) evaluates the successor nodes, and
(4) (\emph{queueing}) reinserts them into the open list.
Termination typically occurs when the node selected for expansion satisfies a goal condition,
but a satisficing/agile algorithm can perform \emph{early goal detection},
which immediately checks whether any successor node generated in step (2) satisfies the goal condition.
Since this paper focuses on agile search, we use early goal detection for all algorithms.

Within forward search,
forward \emph{best-first} search defines a particular ordering in the open list
by defining \emph{node evaluation criteria} (NEC) $f$ for selecting the best node in each iteration.
Let us denote a node by $n$ and the state represented by $n$ as $s_n$.
As NEC,
Dijkstra search \citep{dijkstra1959note} uses $f_{\mathrm{Dijkstra}}(n)=g(n)$ ($g$-value), the minimum cost from the initial state $I$ to the state $s_n$ found so far.
\astar \citep{hart1968formal} uses $f_{\astar}(n)=g(n)+h(s_n)$, the sum of $g$-value and the value returned by a heuristic function $h$ ($h$-value).
Greedy Best First Search \citep[\gbfs]{bonet2001planning} uses $f_{\gbfs}(n)=h(s_n)$.
Forward best-first search that uses $h$ is called forward \emph{heuristic} best-first search.
Dijkstra search is a special case of \astar with $h(s)=0$.

Typically, an open list is implemented as a priority queue ordered by NEC.
Since the NEC can be stateful, e.g., $g(s_n)$ can update its value,
a priority queue-based open list, depending on implementation, may have unfavorable time complexity for removals and thus may assume monotonic updates to the NEC. 
\astar, Dijkstra, and \gbfs satisfy this condition because
$g(n)$ decreases monotonically and $h(s_n)$ is constant.

MCTS is a class of forward heuristic best-first search
that represents the open list as the leaves of a tree.
We call such a tree a \emph{\topen}.
Our MCTS is based on the description in \citet{keller2013trial} and \citet{schulte2014balancing},
whose implementation details are available in the appendix.
Overall, MCTS works in the same manner as other best-first searches with a few key differences.
(1) (\emph{selection}) To select a node from the \topen,
it recursively selects an action at each depth level of the tree,
starting from the root, using the NEC
to select a successor node,
descending until reaching a leaf node.
(Sometimes the action selection rule is also called a \emph{tree policy}.)
At the leaf, it
(2) (\emph{expansion}) generates successor nodes,
(3) (\emph{evaluation}) evaluates the new successor nodes,
(4) (\emph{queueing}) attaches them to the leaf, and
\emph{backpropagates} (or \emph{backs-up}) the information to the leaf's ancestors, all the way up to the root.

The evaluation obtains a heuristic value $h(s_n)$ of a leaf node $n$.
In adversarial games like Backgammon or Go, it is obtained either by
(1) hand-crafted heuristics,
(2) \emph{playouts} (or \emph{rollouts})
where the behaviors of both players are simulated
by (e.g.~uniformly) random actions (\emph{default policy}) until the game terminates,
or (3) a hybrid \emph{truncated simulation},
which returns a hand-crafted heuristic after performing a short simulation \citep{gelly2011monte}.
In recent work, the default policy is replaced by a learned policy \citep{alphago}.

Trial-based Heuristic Tree Search \citep[THTS]{keller2013trial,schulte2014balancing},
an MCTS for classical planning,
is based on two key observations:
(1) the rollout is unlikely to terminate in classical planning due to sparse goals,
unlike adversarial games, like Go, which are guaranteed to finish in a known number of steps with a clear outcome (win/loss); and
(2) a \topen can efficiently reorder entire subtrees of nodes, and
thus is more flexible than a priority queue-based open list, and
can readily implement traditional algorithms such as \astar and \gbfs without significant performance penalty.
In this paper, we use THTS and MCTS interchangeably.

Finally, Upper Confidence Bound applied to trees \citep[UCT]{kocsis2006bandit}
is an MCTS that uses the UCB1
Multi-Armed Bandit
algorithm for action selection and became widely popular in adversarial games.
\citet{schulte2014balancing} proposed several variants of UCT including GreedyUCT (\guct),
which differs from UCT in that
the NEC assigned to the node is simply its heuristic value $h(s_n)$ just like in GBFS,
rather than the $f$-value $(f = g(n)+h(s_n))$.
This paper
only discusses the greedy variants
due to our focus on agile planning.

\section{Heuristic Search with MABs}
\label{sec:issues}

We first revisit \gbfs and GUCT 
 from an MAB perspective.
While \citet{keller2013trial} generalized various algorithms
focusing on the procedural aspects (e.g., recursive backup),
we focus on their mathematical meaning.
\begin{defi}[NECs]
Let
$S(n)$ be the successors of a node $n$,
$L(n)$ be the leaf nodes in the subtree under $n$.
The NEC of GBFS and \guct are shown below,
where
$p$ is $n$'s parent, and thus $|L(p)|$ and $|L(n)|$ correspond to $T$ and $t_i$ in \refeq{eq:ucb1}.
\begin{align*}
 f_{\gbfs}(n)
 &=h_{\gbfs}(n) \\
 f_{\guct}(n)
 &=h_{\guct}(n) -c\sqrt{{(2\log |L(p)|)}/{|L(n)|}}
\end{align*}
\end{defi}
MCTS/THTS computes 
$h_{\gbfs}(n)$, $h_{\guct}(n)$ using backpropagation.
Below,
we expand the definitions of two backup functions presented by \citet{keller2013trial}
recursively down to the leaves,
assuming $h_{\gbfs}(n)=h_{\guct}(n)=h(s_{n})$ if $n$ is a leaf.
\begin{defi}[Full Bellman Backup]
\begin{align*}
 h_{\gbfs}(n)
 &\textstyle= \min_{n'\in S(n)} [h_{\gbfs}(n')]\\
 &\textstyle= \min_{n'\in S(n)} [\min_{n''\in S(n')} [h_{\gbfs}(n'')]]\\
 =\ldots
 &\textstyle= \min_{n'\in L(n)} [h(s_{n'})].
\end{align*}
\end{defi}
\begin{defi}[Monte Carlo Backup]
\begin{align*}
 h_{\guct}(n)
 =& \textstyle \sum_{n'\in S(n)} \frac{|L(n')|}{|L(n)|} h_{\guct}(n') & \\
 =  \textstyle \sum_{n'\in S(n)} &\textstyle \frac{\cancel{|L(n')|}}{|L(n)|} \sum_{n''\in S(n')} \frac{|L(n'')|}{\cancel{|L(n')|}} h_{\guct}(n'') & \\
 =\ldots
 =& \textstyle \frac{1}{|L(n)|}\sum_{n'\in L(n)} h(s_{n'}). &
\end{align*}
\end{defi}

\noindent Notice that each backup
is equivalent to simply computing the minimum or the weighted mean over all leaves in the subtree,
where each leaf $n'$ has $|L(n')|=1$ in classical planning.
In other words,
the set $\braces{h(s_{n'}) \mid n'\in L(n) }$ 
 is a \emph{reward dataset},
the heuristic $h(s_{n'})$ at each leaf $n'$ is a \emph{reward sample} in the dataset,
and the NECs use their \emph{statistics},
such as the mean and the minimum, estimated by Maximum Likelihood Estimation (MLE).
Backpropagation is just an effective way to update and cache the statistics.

\begin{theo}
 \label{theo:mle-gaussian}
 Given \iid $x_1,\ldots,x_N\sim \N(\rx|\mu,\sigma)$ (i.e., $\rx\sim\N(\mu,\sigma)$),
 the MLEs of $\mu$ and $\sigma$ are
 the empirical mean $\hat{\mu}=\frac{1}{N}\sum_i x_i$ and
 variance $\hat{\sigma}^2=\frac{1}{N-1}\sum_i (x_i-\hat{\mu})$.
 (Well-known result. Educational proof in appendix.)
\end{theo}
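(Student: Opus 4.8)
The plan is to carry out the standard maximum-likelihood computation and then supply the one piece of rigor that an ``educational'' proof should not omit: verifying that the unique stationary point of the likelihood is actually its global maximizer.

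First I would write down the log-likelihood. Because the samples are \iid, the joint density factorizes, and taking logs gives
\[
 \ell(\mu,\sigma) = -\tfrac{N}{2}\log(2\pi) - N\log\sigma - \tfrac{1}{2\sigma^2}\textstyle\sum_{i=1}^{N}(x_i-\mu)^2 .
\]
Next I would compute the two partial derivatives and set them to zero. From $\partial\ell/\partial\mu = \sigma^{-2}\sum_i (x_i-\mu)=0$ I get $\hat\mu = \frac{1}{N}\sum_i x_i$, and since this holds for every $\sigma$, the maximizing $\mu$ does not depend on $\sigma$. Substituting $\hat\mu$ and solving $\partial\ell/\partial\sigma = -N\sigma^{-1} + \sigma^{-3}\sum_i (x_i-\hat\mu)^2 = 0$ gives $\hat\sigma^2 = \frac{1}{N}\sum_i (x_i-\hat\mu)^2$. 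I would flag here that the MLE normalization is $\frac{1}{N}$; the $\frac{1}{N-1}$ version named in the statement is the bias-corrected sample variance, which is what is customarily reported in practice and is related to the MLE by $\E[\hat\sigma^2_{\mathrm{MLE}}] = \frac{N-1}{N}\sigma^2$ (and the displayed formula should of course read $\sum_i(x_i-\hat\mu)^2$).

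The step I expect to need the most care is confirming that this critical point is the global maximum rather than merely stationary, since $\ell$ is \emph{not} jointly concave in $(\mu,\sigma)$. The cleanest route is the profile likelihood: for fixed $\sigma$, $\mu\mapsto\ell(\mu,\sigma)$ is a downward parabola with vertex at $\hat\mu$, so it suffices to maximize the single-variable function $g(\sigma)=\ell(\hat\mu,\sigma) = -N\log\sigma - \tfrac{1}{2\sigma^2}S + \mathrm{const}$ over $\sigma>0$, where $S=\sum_i (x_i-\hat\mu)^2$. Assuming the data are not all identical (so $S>0$), I would check that $g(\sigma)\to-\infty$ as $\sigma\to 0^+$ and as $\sigma\to\infty$, that $\hat\sigma$ is the only critical point, and that $g''(\hat\sigma)<0$, whence $\hat\sigma$ is the global maximizer and $(\hat\mu,\hat\sigma)$ maximizes $\ell$. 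An equivalent, slightly slicker variant is to reparametrize by the precision $\tau=1/\sigma^2$: after profiling out $\mu$, the objective $\tfrac{N}{2}\log\tau - \tfrac{\tau}{2}S + \mathrm{const}$ is strictly concave in $\tau$, so the stationarity condition is automatically sufficient for optimality. I would close with the standard caveat that the degenerate case $S=0$ makes the likelihood unbounded above, so the MLE is defined only for a non-constant sample.
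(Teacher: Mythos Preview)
Your derivation is correct and is exactly the standard textbook computation one expects from an ``educational proof''; the paper's appendix is not included in the provided source, but there is no other natural route here, and your extra care with the profile-likelihood argument to certify global optimality is a nice touch that many presentations omit. You are also right to flag the two slips in the statement itself: the genuine MLE of $\sigma^2$ has normalization $\tfrac{1}{N}$, not $\tfrac{1}{N-1}$ (the latter is the unbiased sample variance), and the summand should be $(x_i-\hat\mu)^2$.
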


Understanding each $h(s)$
as a sample of a random variable representing a reward for MABs
makes it clear that existing MCTS/THTS for classical planning
fails to leverage the theoretical efficiency guarantees
from the rich MAB literature.
For example,
if we apply UCB1 to heuristic values in classical planning,
UCB1 no longer guarantees asymptotically optimal convergence toward the best arm
because
it incorrectly assumes $h \in [0,c]$ for a fixed hyperparameter $c$,
i.e., that $h$ has an \emph{a priori} known constant range $[0,c]$,
which in fact does not exist ($h$ varies significantly across states, and can be $\infty$).
This cannot be fixed by simply making $c$ larger.

\citet{wissow2024scale} proposed \guct-Normal and \guct-Normal2,
MCTS algorithms for classical planning that use Gaussian bandits
UCB1-Normal (\refeq{eq:gaussian-bandits-auer}) \citep{auer2002finite} and
UCB1-Normal2 (\refeq{eq:gaussian-bandits}),
motivated by
the Gaussian distribution's inclusive support range $\R$,
and in effect these bandits use the $h$ sample variance to dynamically estimate UCB1's $c$, i.e., the `exploration rate'.
Given $\rr_i\sim \N(\mu_i,\sigma_i)$,
let $\hat{\mu}_i$ and $\hat{\sigma}_i$ be the MLEs of $\mu_i,\sigma_i$ of arm $i$.
\begin{align}
 \text{U/LCB1-Normal}_i  &= \textstyle \hat{\mu}_i \pm \hat{\sigma}_i \sqrt{{(16\log T)}/{t_i}}.\label{eq:gaussian-bandits-auer}\\
 \text{U/LCB1-Normal2}_i &= \textstyle \hat{\mu}_i \pm \hat{\sigma}_i \sqrt{2\log T}.
 \label{eq:gaussian-bandits}
\end{align}
Each $(\hat{\mu}_i, \hat{\sigma}_i)$ corresponds to the average and the standard deviation of
the dataset $\braces{h(s_{n'}) \mid n'\in L(n) }$ of a node $n$.
\guct-Normal2 outperformed GBFS, GUCT, GUCT-Normal, and other variance-aware bandits.

Although \guct-Normal2 explored better than existing algorithms
while not violating assumptions about the reward range,
it still does not fully characterize the nature of heuristic functions,
as we describe below.

\paragraph{Under-Specification}
\label{sec:issues-underspecifictation}

\guct over-specifies the rewards to be in a fixed range $[0,c]$.
While $\N(\mu,\sigma)$ does not have this issue,
its support $\R=(-\infty,\infty)$ is an under-specification because
heuristic values are \emph{non-negative}, $\R^+=[0,\infty)$.

Moreover, the range can be narrowed down further.
For example, the FF heuristic \citep{hoffmann01} satisfies $\ff \in [h^+,\infty)$,
i.e., lower bounded by optimal delete relaxation heuristic $h^+$,
though this value is \textbf{NP}-complete to compute \citep{bylander1994} and thus in practice \emph{unknown}.
Similarly, the $\hmax$ heuristic is bounded by $[0,h^+]$.
Finally,
$h^+$ can be $\infty$ when the state is at a dead-end.
This indicates that the support of a heuristic function is generally \emph{unknown and half-bounded}.
Choosing an appropriate distribution, and a corresponding MAB that correctly leverages its properties,
should make MCTS faster.
A similar statistical modeling flaw was recently discussed in supervised heuristic learning \citep{nunez2024using}.

\paragraph{Estimating the Minimum}
\label{sec:issues-minimum}
Another issue in existing work is the use of the minimum (Full Bellman backup) in the \guct*-family of algorithms \cite{schulte2014balancing},
which lacks statistical justification,
in particular a theoretical explanation of \emph{why} using the minimum is allowed.
Regardless of whether rewards have finite-support or follow a Gaussian distribution,
the mean $\mu_i$, not the minimum, is inextricable from the design of and regret bound proofs for UCB1/-Normal/2.
In contrast, the theoretical framework we present in the next section addresses this
conflict.

One candidate for addressing the extrema of reward distributions was
proposed by \citet{tesauro2010bayesian}.
Given two Gaussian RVs $X_1\sim \N(\mu_1,\sigma^2_1)$ and $X_2\sim \N(\mu_2,\sigma^2_2)$,
the backup uses their maximum $\max(X_1,X_2) \sim \mathcal{N}(\mu_3,\sigma_3)$ where
$\mu_3=\mu_1\Phi(\alpha)+\mu_2\Phi(-\alpha)+(\sigma_1^2+\sigma_2^2)\phi(\alpha)$ and
$\sigma_3=(\mu_1^2+\sigma_1^2)\Phi(\alpha)+(\mu_2^2+\sigma_2^2)\Phi(-\alpha)+(\mu_1+\mu_2)(\sigma_1^2+\sigma_2^2)\phi(\alpha)$,
where $\Phi$ and $\phi$ are the CDF and the PDF of $\N(0,1)$ \citep{clark1961greatest}.
Unfortunately, this is merely an approximation if we combine the estimates iteratively for more than two arms,
as noted in \shortcite{tesauro2010bayesian}.
In our experiments,
we implemented this backup and call it GUCT$^+$ variants.

\paragraph{Dead-End Removal}
\label{sec:issues-deadend}
$\ff$ lacks an upper bound and could return $\infty$ when a node is a dead-end,
which is problematic for MABs that use the average of rewards, including both UCB1 and UCB1-Normal2.
Imagine that an arm returned rewards 3, 5, 4 and $\infty$ in order.
Once an arm receives the fourth reward $\infty$,
then the estimated mean suddenly becomes $\infty$ regardless of those observed previously
---i.e., $\frac{3+5+4+\infty}{4}=\infty$---
falsely masking potential solutions in the subtree.
\citet{schulte2014balancing} recognized this issue and
decided to exclude $\infty$ from rewards by removing dead-end nodes from the tree,
but this approach lacks statistical, MAB-theoretic justification as to why such a removal of a value from the sample is allowed.

\section{Extreme Value Theory (EVT)}
\label{sec:extreme-mcts}

To address these theoretical issues,
we use Peaks-Over-Threashold Extreme Value Theory (POT EVT).
\label{sec:evt-background}
Regular statistics are typically built around the Central Limit Theorem \citep[CLT]{laplace1812centrallimittheorem},
which deals with the limit behavior of the average of samples.
In contrast, a branch of statistics 
called \emph{Extreme Value Theory} \citep[EVT]{beirlant2004statistics,dehaan2006extreme}
describes the limit behavior of the maximum of samples.
EVT has been historically used for safety-critical applications
whose worst case behaviors matter.
For example, in hydrology, the estimated annual maximum water level of a river is used to
decide the height for an embankment.
We explain the EVT by way of first reviewing the CLT.

\begin{defi}
 A series of functions $f_n$ \emph{converges pointwise} to $f$ ($f_n\to[n\to\infty] f$)
 when
 $\forall x; \forall \epsilon; \exists n; |f_n(x)- f(x)|<\epsilon$.
\end{defi}

\begin{defi}
 A series of RVs $\rx_n$ \emph{converges in distribution} to a RV $\rx$
 if $p\parens{\rx_n}=f_n \to[n\to\infty] f=p\parens{\rx}$,
 denoted as $\rx_n\to[D] \rx$.
\end{defi}

\begin{theo}[CLT]
 Let $\rx_1,\ldots \rx_n$ be i.i.d., $\forall i;\E[\rx_i]=\mu$, $\Var[\rx_i]=\sigma^2$.
 Then
 $\sqrt{n}\parens{\frac{\sum_i \rx_i}{n}-\mu} \to[D] \ry\sim\N(0,\sigma)$. \label{theo:clt}
 I.e.,
 \ul{if $\rx_i$'s distribution $p(\rx_i)$ has a finite mean and variance},
 the average of $\rx_1\ldots\rx_n$ converges ($n\to\infty$) in distribution to a Gaussian with the same mean/variance,
 \ul{regardless of other details of $p(\rx_i)$, e.g., shape or support.}
\end{theo}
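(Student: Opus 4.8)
The plan is to prove the CLT by the classical route of characteristic functions together with L\'evy's continuity theorem, which is the cleanest way to exploit precisely the hypothesis emphasized in the statement --- that only a finite mean and variance are assumed. First I would reduce to the standardized case: set $\rz_i=(\rx_i-\mu)/\sigma$, so the $\rz_i$ are i.i.d.\ with $\E[\rz_i]=0$, $\Var[\rz_i]=1$, and $\sqrt{n}\parens{\frac{1}{n}\sum_i \rx_i-\mu}=\frac{1}{\sqrt{n}}\sum_i \rz_i=:\rs_n$. It then suffices to show $\rs_n\to[D] \N(0,1)$ and rescale the limit by $\sigma$ at the end (recall the paper's convention that the second argument of $\N$ is the standard deviation).

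Next I would pass to characteristic functions. Let $\varphi(t)=\E[e^{it\rz_1}]$ be the common characteristic function. By independence and the scaling defining $\rs_n$, we get $\varphi_{\rs_n}(t)=\varphi(t/\sqrt{n})^{\,n}$. Because $\rz_1$ has finite second moment, $\varphi$ admits a second-order expansion at $0$ with $\varphi(0)=1$, $\varphi'(0)=i\E[\rz_1]=0$, and $\varphi''(0)=-\E[\rz_1^2]=-1$, hence $\varphi(s)=1-\tfrac{s^2}{2}+o(s^2)$ as $s\to 0$. Substituting $s=t/\sqrt{n}$ gives $\varphi_{\rs_n}(t)=\bigl(1-\tfrac{t^2}{2n}+o(1/n)\bigr)^{n}\to e^{-t^2/2}$ for each fixed $t$, using the elementary fact that $(1+z_n/n)^n\to e^{z}$ whenever $z_n\to z$.

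Then I would identify $e^{-t^2/2}$ as the characteristic function of $\N(0,1)$ and invoke L\'evy's continuity theorem: pointwise convergence of characteristic functions to a limit function that is continuous at $0$ implies convergence in distribution to the corresponding law. This yields $\rs_n\to[D] \N(0,1)$, and undoing the standardization gives $\sqrt{n}\parens{\frac{1}{n}\sum_i \rx_i-\mu}\to[D] \ry\sim\N(0,\sigma)$, as claimed. I would also remark on where the hypotheses enter: finiteness of $\mu$ is needed for the reduction to make sense, and finiteness of $\sigma^2$ is exactly what licenses the second-order expansion of $\varphi$ --- no third moment and no assumption on the shape or support of $p(\rx_i)$ is used, which is the point of the underlined remark in the statement.

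The main obstacle --- and the step that would need genuine care in an appendix writeup rather than a one-line wave --- is justifying the second-order Taylor expansion of $\varphi$ from a finite \emph{second} moment only; one cannot simply differentiate under the expectation twice. The standard remedy is the inequality $\bigl|e^{is}-(1+is-\tfrac{s^2}{2})\bigr|\le \min(|s|^3,|s|^2)$ applied with $s=t\rz_1/\sqrt{n}$, followed by dominated convergence to show $\E\bigl[\rz_1^2\,\mathbf{1}(|\rz_1|>\eps\sqrt{n})\bigr]\to 0$, which controls the error term uniformly enough to conclude. An alternative, Fourier-free proof is the Lindeberg replacement method: swap the $\rz_i$ one at a time for independent standard Gaussians and bound the telescoped error against a smooth test function using a third-order Taylor estimate with truncation; this avoids the continuity theorem but is more computational. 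I would present the characteristic-function argument as the main proof and perhaps mention Lindeberg's method as a remark.
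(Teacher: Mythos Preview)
Your proposal is a correct and standard proof of the classical CLT via characteristic functions and L\'evy's continuity theorem, with appropriate care about the second-order expansion under only a second-moment assumption. However, there is nothing to compare against: the paper does not prove this theorem at all. It is stated purely as background (with a citation to \citet{laplace1812centrallimittheorem}) to set up the analogy with the Extremal Limit Theorems, and no proof or proof sketch appears in the body or is promised in the appendix. The only theorems the paper actually proves are the MLE results (Theorems~\ref{theo:mle-gaussian} and~\ref{theo:mle-uniform}, with ``educational'' proofs deferred to the appendix) and the main UCB1-Uniform regret bound. So your writeup, while mathematically fine, would be supplying a proof the paper deliberately omits as standard.
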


A common misunderstanding is that CLT assumes each RV $\rx_i$ to follow a Gaussian (untrue).
\ul{CLT's strength comes from its minimal assumption that $\rx_i$ are i.i.d. and share a finite $\mu$ and $\sigma$,
and nothing else.}
$\rx_i$ can follow Laplace, but not Cauchy (mean and variance are undefined).
In heuristic search, $\rx_i$ is a random choice from $\braces{h(s_{n'})}$ in a subtree of a node $n$.
Its mean and variance must be finite;
therefore $\braces{h(s_{n'})}$ should not contain $\infty$ as it makes the average $\infty$.
But it does not require each $h$ to follow a Gaussian
(each $h$ is indeed a Dirac delta $\delta(\rx=h)$, a deterministic value of a state),
nor the histogram of $\braces{h(s_{n'})}$ to resemble a Gaussian; Only the mean and the variance matter.

EVT has two limit theorems similar to the CLT, called the Extremal Limit Theorems (ELT).
The first kind, the \emph{Block Maxima} ELT
\citep{fisher1928limiting,gnedenko1943limiting},
states that
the maximum of \iid RVs converges in distribution to an \emph{Extreme Value Distribution}.
Given multiple subsets of data points,
it models the maximum of the next subset (\emph{block maxima}),
e.g., it predicts the maximum of the next month from the maxima of past several months.
However, what we use is
the second kind,
\emph{Peaks-Over-Threshold} (POT) ELT
\citep{pickands1975statistical,balkema1974residual}, which states that
the excesses of \iid RVs over a sufficiently high threshold $\theta$
converge in distribution to a Generalized Pareto (GP) distribution (\refig{fig:gp}),
predicting \emph{future excesses over $\theta$}.

\begin{defi}[Generalized Pareto Distribution]
 \label{def:gp}
 \begin{align*}
  \gp(\rx\mid\theta,\sigma,\xi)
  &=
  \left\{
  \begin{array}{ll}
   \frac{1}{\sigma}\parens{1+\xi\frac{\rx-\theta}{\sigma}}^{-\frac{\xi+1}{\xi}}& (\xi\not=0)\\
   \frac{1}{\sigma}\exp \parens{-\frac{\rx-\theta}{\sigma}}          & (\xi=0)
  \end{array}
  \right. (\rx>\theta)
 \end{align*}
\end{defi}

\begin{figure}[tb]
 \includegraphics[width=\linewidth]{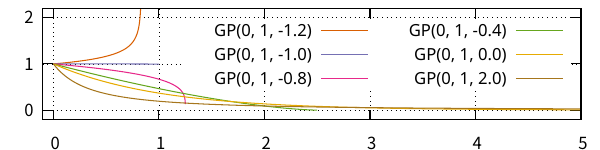}
 \caption{Generalized pareto distribution $\gp(0,1,\xi)$.}
 \label{fig:gp}
\end{figure}

\begin{figure}[tb]
 \includegraphics[width=\linewidth]{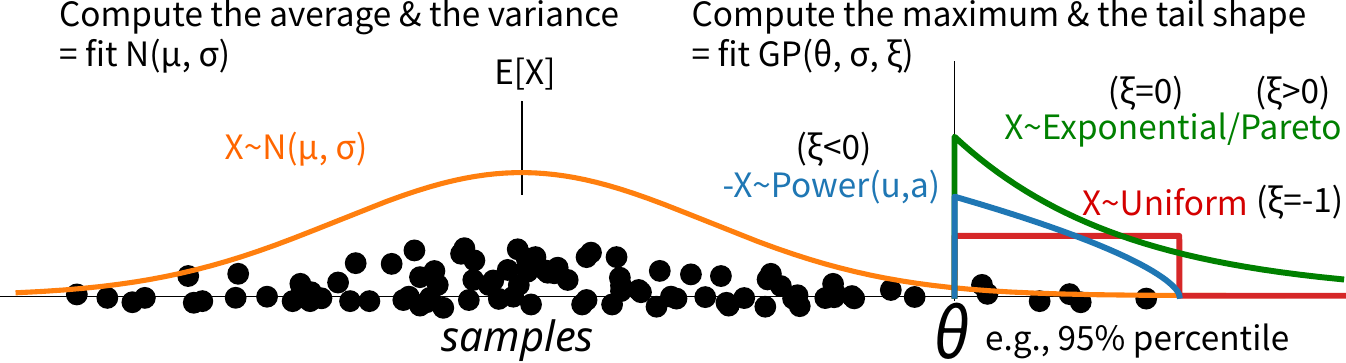}
 \caption{
 Computing the average and the variance is seen as fitting $\N(\mu,\sigma)$;
 Computing the maximum and the shape of the tail distribution is seen as fitting
 $\gp(\mu,\sigma,\xi)$ with $\xi<0$.
 }
 \label{fig:clt-vs-evt}
\end{figure}

\begin{theo}[Pickands--Balkema--de\,Haan\,theorem]
 \label{theo:evt-type2}
 Let $\rx_1,\ldots, \rx_n \sim p(\rx)$ be i.i.d. RVs
 and $\rx_{k,n}=\theta$ be their top-$k$ elements.
 As $n\to \infty$, $k\to \infty$, $\frac{k}{n}\to 0$ ($k\ll n$), then
 $p(\rx\mid\rx>\theta) \to[D] \gp(\rx\mid \theta,\sigma,\xi)$
 for some $\sigma\in\R^+,\xi \in \R$,
 \ul{regardless of other details of $p(\rx_i)$, e.g., shape or support.}
\end{theo}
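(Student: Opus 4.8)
The route I would take is the classical one: reduce the Peaks-Over-Threshold statement to the Block Maxima ELT (Fisher--Tippett--Gnedenko), which we may assume, via the equivalence ``$F$ lies in the max-domain of attraction of the extreme-value law of shape $\xi$ $\iff$ the excess-over-threshold distribution of $F$ converges to $\gp(\,\cdot\mid 0,\sigma,\xi)$'' (the statement implicitly restricts to $F$ in some such domain, which covers every tail of practical interest). For notation, write $F$ for the CDF of $\rx$, $\bar F=1-F$ for its tail, $x_F=\sup\braces{x\mid F(x)<1}$ for the right endpoint of its support, and, for $\theta<x_F$, $\bar F_\theta(y)=\bar F(\theta+y)/\bar F(\theta)$ ($y\geq 0$) for the conditional excess tail. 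By \refdef{def:gp} the goal becomes: there is a positive \emph{scale function} $\sigma(\theta)$ with $\bar F_\theta(y)\to\parens{1+\xi y/\sigma(\theta)}^{-1/\xi}$ (and $\bar F_\theta(y)\to\exp(-y/\sigma(\theta))$ when $\xi=0$) as $\theta\uparrow x_F$, uniformly over the excess range.

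First I would fix $\xi$ and split into the three domains, invoking in each the regular-variation description that the Block Maxima ELT supplies. For $\xi>0$ (Fr\'echet domain) $\bar F$ is regularly varying at $\infty$ with index $-1/\xi$; with scale $\sigma(\theta)=\xi\theta$, the substitution $y=\sigma(\theta)z$ turns $\bar F_\theta(\sigma(\theta)z)$ into $\bar F(\theta(1+\xi z))/\bar F(\theta)$, which tends to $(1+\xi z)^{-1/\xi}$ straight from the definition of regular variation. For $\xi<0$ (reversed-Weibull domain) one has $x_F<\infty$ and $t\mapsto\bar F(x_F-1/t)$ regularly varying with index $1/\xi$; taking $\sigma(\theta)=-\xi(x_F-\theta)$ and making the analogous substitution gives $(1+\xi z)^{-1/\xi}$ on the truncated range by an equally short computation. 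The case $\xi=0$ (Gumbel domain) is where the work lies: there is no regular-variation shortcut, so one must bring in the von Mises / Karamata representation of the Gumbel domain --- the existence of an auxiliary function $a(t)=\int_{t}^{x_F}\bar F(u)\,du/\bar F(t)>0$ with $\bar F(t+z\,a(t))/\bar F(t)\to e^{-z}$ locally uniformly in $z$ --- and then set $\sigma(\theta)=a(\theta)$ to recover exactly the $\xi=0$ GP tail.

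With the pointwise (in the rescaled variable) limit in hand in all three cases, I would upgrade it to the asserted uniform convergence: the limiting GP tail is continuous and monotone in $y$ and each $\bar F_\theta$ is monotone, so P\'olya's convergence theorem turns pointwise convergence of these monotone functions to their continuous limit into uniformity over the whole excess range, closing the deterministic-threshold version. To land the Theorem exactly as stated --- with the random threshold $\theta=\rx_{k,n}$, the $k$-th largest among $n$ \iid draws, under $n\to\infty$, $k\to\infty$, $k/n\to 0$ --- I would note that these index conditions force $\rx_{k,n}\to[P]x_F$ by elementary empirical-quantile convergence, and combine this with the established uniform convergence through a Slutsky-type substitution. \textbf{Main obstacle.} The crux is the $\xi=0$ Gumbel case: establishing, with the required local uniformity, the exponential limit $\bar F(t+z\,a(t))/\bar F(t)\to e^{-z}$ from bare membership in the Gumbel domain needs the auxiliary-function machinery and a careful application of Karamata's theorem to $\int_{\cdot}^{x_F}\bar F$; by comparison the Fr\'echet and reversed-Weibull computations, the monotone-convergence uniformization, and the random-threshold bridge are all routine once that ingredient is secured.
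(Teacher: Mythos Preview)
The paper does not actually prove this theorem: it is stated as a classical result and attributed to \citet{pickands1975statistical} and \citet{balkema1974residual}, with the textbook references \citep{beirlant2004statistics,dehaan2006extreme} given for background. There is therefore no ``paper's own proof'' to compare against.

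Your outline is the standard textbook route (essentially the argument in \citet{dehaan2006extreme}): reduce to the Block Maxima ELT via the equivalence between max-domain of attraction and convergence of excess distributions, split into the three domains, use regular variation for $\xi\neq 0$ and the auxiliary-function representation for the Gumbel case, then upgrade to uniformity via monotonicity. That is correct and complete as a sketch. Your caveat that the theorem as stated is slightly overclaimed --- it really requires $F$ to lie in \emph{some} max-domain of attraction, not literally ``regardless of other details'' --- is also well taken; the paper's phrasing is informal on this point. The one place your plan is a bit loose is the final random-threshold step: you would need a little more than Slutsky, since you are substituting a random argument into a sequence of functions that is itself changing with $\theta$, but the uniform convergence you have already established is exactly what makes that substitution go through, so this is a matter of phrasing rather than a gap.
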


$\theta$, $\sigma$, and $\xi$ are called the location, scale, and shape parameters.
GP has support $\rx\in [\theta,\theta-\frac{\sigma}{\xi}]$ when $\xi<0$, otherwise $\rx\in [\theta,\infty)$.
The shape dictates the tail behavior:
$\xi>0$ corresponds to a heavy-tailed distribution,
$\xi<0$ corresponds to a short-tailed distribution (i.e., has an upper limit),
and $\xi=0$ corresponds to an Exponential distribution.
Pareto, Exponential, Reverted Power, and Uniform distributions are special cases of $\gp$.

\refig{fig:clt-vs-evt} shows a conceptual illustration of POT.
In the standard statistical modeling,
practitioners often compute the average and the standard deviation of the data to fit $\N(\mu,\sigma)$,
which models the ``normative'' behavior of samples that appears at the center of the distribution.
In contrast, POT models rare events occuring in the \emph{tail distribution}.
Practitioners first extract a top-$k$ subset of samples in various ways, e.g.,
setting a threshold $\theta$, selecting the top 5\%, or directly specifying $k$,
then fit the parameters $\sigma,\xi$ of $\gp(\theta,\sigma,\xi)$ on this subset,
which predicts future excesses.
GP is accurate when
we retain (extract) enough data $k\to \infty$ as $n\to \infty$
while ignoring almost all data ($\frac{k}{n}\to 0$).
For example, estimates from top-1\% examples ($\frac{k}{n}=0.01$)
tend to be more accurate than those from top-5\% ($\frac{k}{n}=0.05$),
if $k$ is the same.

EVTs are appealing because \lsota search algorithms such as \gbfs are based on the minimum.
It is also worth noting that the short-tailed $\gp$ ($\xi<0$) resolves the shortcomings discussed in \refsec{sec:issues}.
Consider a maximization scenario (as GP models the maxima), where
we negate the heuristic values into rewards $-\ff\in (-\infty,-h^+]$.
By fitting $\sigma$ and $\xi$ to the data,
a short-tailed $\gp$ gives us an upper support $\theta-\frac{\sigma}{\xi}$,
which works as an estimate of $-h^+$.
$\gp$ also \emph{justifies} discarding $-\ff$ below $\theta$,
including the dead-ends $-\ff=-\infty$,
because $\gp$ is conditioned (only) by $\rx>\theta$.

\label{sec:estiamte-gpd}

One difficulty of $\gp$ is its parameter estimation,
which has been extensively studied with varying success
\citep{smith1987estimating,hill1975simple,resnick1997smoothing,hosking1987parameter,diebolt2005quasi,sharpe2021estimation}.
To avoid this,
we focus on the uniform distribution $U(l,u)$ with an unknown support $[l,u]$,
sacrificing one degree of freedom:
It is a special case with $\xi=-1$.
Note that \ul{POT does not assume any distribution (just like CLT);
i.e., we do not assume heuristic values follow $U(l,u)$ or GP.}

\begin{defi}
 The Uniform distribution is defined as follows.
\begin{align*}
 U(\rx|l,u)&\textstyle=\frac{1}{u-l}.\quad (l \le x \le u) \qquad \E[\rx]=\frac{l+u}{2}.
\end{align*}
\end{defi}

\begin{theo}
 $\gp(\theta,\sigma,-1)=U(\theta,\theta+\sigma)$. (proof omitted)
\end{theo}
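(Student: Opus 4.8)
The plan is to substitute $\xi=-1$ directly into the $\xi\not=0$ branch of Definition~\ref{def:gp}, simplify the resulting expression, and then verify that the support interval coincides with that of the claimed uniform distribution and that the densities agree pointwise.

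First I would handle the exponent. For general $\xi$ the GP density carries the power $-\frac{\xi+1}{\xi}$; evaluated at $\xi=-1$ this is $-\frac{-1+1}{-1}=0$. Hence
\begin{align*}
 \parens{1+\xi\tfrac{\rx-\theta}{\sigma}}^{-\frac{\xi+1}{\xi}}
 \;=\;\parens{1-\tfrac{\rx-\theta}{\sigma}}^{0}\;=\;1,
\end{align*}
valid exactly where the base is strictly positive, i.e.\ $\rx<\theta+\sigma$. So on that region the GP density $\gp(\rx\mid\theta,\sigma,-1)$ collapses to the constant $\frac{1}{\sigma}$.

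Next I would pin down the support. The excerpt already records that a short-tailed GP ($\xi<0$) has support $[\theta,\theta-\frac{\sigma}{\xi}]$; plugging in $\xi=-1$ gives $[\theta,\theta+\sigma]$, consistent with the constraint $\rx>\theta$ in Definition~\ref{def:gp} and with the upper cutoff $\rx<\theta+\sigma$ found above. On the other hand, by the definition of $U(l,u)$ with $l=\theta$ and $u=\theta+\sigma$, the uniform density equals $\frac{1}{u-l}=\frac{1}{\sigma}$ on $[\theta,\theta+\sigma]$. Since $\gp(\rx\mid\theta,\sigma,-1)$ and $U(\rx\mid\theta,\theta+\sigma)$ have the same support and the same (constant) density there, they are the same distribution.

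There is essentially no obstacle here; the only point requiring a moment's care is the indeterminate-looking exponent $-\frac{\xi+1}{\xi}$, which must be read as its genuine value $0$ at $\xi=-1$ rather than mishandled as $0/0$, together with the observation that $b^{0}=1$ only for $b>0$ — which is precisely what truncates the support at $\theta+\sigma$ and turns the unbounded-looking GP formula into a bounded uniform density.
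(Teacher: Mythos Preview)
Your argument is correct: substituting $\xi=-1$ into the $\xi\neq 0$ branch of \refdef{def:gp}, reading off the constant density $\tfrac{1}{\sigma}$, and matching the support $[\theta,\theta-\sigma/\xi]=[\theta,\theta+\sigma]$ to that of $U(\theta,\theta+\sigma)$ is exactly the intended verification. The paper omits the proof entirely, so there is nothing to compare against; your direct computation is the standard one. One small remark: the exponent $-\tfrac{\xi+1}{\xi}$ at $\xi=-1$ is genuinely $0$ (numerator $0$, denominator $-1$) and not an indeterminate form --- the $0/0$ ambiguity would only arise at $\xi=0$, which the definition already treats separately --- and the upper support cutoff comes from the GP's built-in constraint $1+\xi\tfrac{\rx-\theta}{\sigma}>0$ rather than from any restriction on $b^{0}$; but neither point affects the validity of what you wrote.
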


One last bit of the detail is how to set $\theta$, but we actually do not use any explicit threshold.
Observe that in heuristic search, the search is already heavily focused toward the goal.
The search space ($n$ nodes) is exponentially large and
mostly unexplored, so the observed nodes ($k$ nodes) are the tiny fraction ($k \ll n$)
with small heuristic values relative to the entire state space.
Visiting a state $s$ with $h(s)>h(I)$ of initial state $I$ tends to be rare.
We confirmed this with GBFS+$\ff$ in the benchmark (\refsec{sec:experiments}):
Only 3.3\% of the evaluated nodes had $h(s)>h(I)$,
thus bad nodes are already rare in the reward set.
As a result,
our implementation omits explicit filtering of samples,
relying on implicit filtering from not expanding such bad states.
The only explicit rule is the dead-end removal.
Future work could theoretically justify the pruning with the cost of an
incumbent solution in iterated anytime search \cite{richter2010joy}.

\section{Bandit for Uniform Distributions}
\label{sec:evt-for-mcts}

To define our POT-based search algorithm,
we first review the Maximum Likelihood Estimates (MLEs) of Uniform distributions,
then propose a bandit that uses these estimates,
which is then used by MCTS as its NEC.

\begin{theo}[MLE of Uniform]
 \label{theo:mle-uniform}
 Given \iid $x_1,\ldots,x_N\sim U(\rx|l,u)$,
 the MLEs are
 $\hat{u}=\max_i x_i$ and $\hat{l}=\min_i x_i$.
 (Well-known result. Educational proof in appendix.)
\end{theo}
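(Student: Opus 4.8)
The plan is to bypass the usual calculus-based route entirely and instead exploit the fact that the uniform likelihood is a monotone function of a single quantity, the gap $u-l$, on the region where it is nonzero. First I would write down the likelihood for fixed $l \le u$: since $U(x\mid l,u) = \frac{1}{u-l}$ when $l\le x\le u$ and $0$ otherwise, the sample likelihood is
\[
 L(l,u) \;=\; \prod_{i=1}^N U(x_i\mid l,u) \;=\; \frac{1}{(u-l)^N}\,\prod_{i=1}^N \mathbf{1}[\,l\le x_i\le u\,],
\]
and the product of indicators equals $1$ precisely when $l \le \min_i x_i$ and $u \ge \max_i x_i$, and $0$ otherwise.

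Second, I would restrict attention to the feasible set $F=\{(l,u): l\le\min_i x_i,\ u\ge\max_i x_i\}$, since off $F$ we have $L=0$, which cannot be optimal once $N\ge 1$. On $F$, $L(l,u)=(u-l)^{-N}$ is strictly decreasing in the gap $u-l$, so maximizing $L$ over $F$ is equivalent to minimizing $u-l$ subject to $u\ge\max_i x_i$ and $l\le\min_i x_i$. These two constraints decouple, so the gap is minimized exactly by pushing $u$ down to $\max_i x_i$ and $l$ up to $\min_i x_i$, yielding $\hat u=\max_i x_i$ and $\hat l=\min_i x_i$, with the minimizer unique whenever $\min_i x_i<\max_i x_i$.

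The only subtlety --- rather than a genuine obstacle --- is that this maximum is attained on the boundary of $F$, so the standard tactic of solving $\partial\log L/\partial l=\partial\log L/\partial u=0$ simply fails: $\log L=-N\log(u-l)$ has a nowhere-vanishing gradient, and it is the support constraints, not a stationarity condition, that pin down the optimum. I would therefore make the monotonicity step explicit and add one line for the degenerate case in which all samples coincide (then $\min_i x_i=\max_i x_i$, the likelihood is unbounded, and the claim holds trivially). If space permits I would also remark, tying back to Section~\ref{sec:extreme-mcts}, that $\hat u$ systematically underestimates $u$ (it is biased low), which is exactly the conservative behavior one wants from a short-tailed $\gp$ fit used to estimate an unknown upper support such as $-h^+$.
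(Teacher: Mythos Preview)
Your argument is correct and is the standard textbook derivation: write the likelihood with indicator functions, restrict to the feasible region, and use monotonicity of $(u-l)^{-N}$ to push $u$ down to $\max_i x_i$ and $l$ up to $\min_i x_i$. The paper relegates its proof to an appendix not reproduced here, but labels it an ``educational proof'' of a ``well-known result,'' so it is almost certainly the same indicator-plus-monotonicity route you take; your additional remarks on the boundary/non-stationarity issue and the degenerate all-equal case are appropriate elaborations rather than departures.
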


In MCTS,
we backpropagate these estimates from the leaves to the root: i.e.,
for $\hat{l}$ and $\hat{u}$
we use Full Bellman backup (use the minimum/maximum among the children).
This provides theoretical guidance on when Full Bellman backup is appropriate:
while Full Bellman backup is a method for efficiently estimating $U(u,l)$ of each node from its subtree,
\guct* uses Full Bellman Backup with an MAB designed for the wrong distribution (a distribution with a known fixed support $[0,c]$), and
therefore does not perform well.
To address this shortcoming, we propose a new MAB for $U(u,l)$:

\begin{theo}[Main results]
 In each trial $t$,
 assuming the reward $\rr_{i}$ of arm $i$ follows
 a Uniform distribution with an unknown support
 $U(l_i,u_i)$, 
 the \emph{U/LCB1-Uniform} policy respectively pulls the arm $i$ that maximizes/minimizes
 \begin{align*}
  \begin{array}{llclc}
   \text{\rm U/LCB1-Uniform}_i &= \frac{\hat{u}_i+\hat{l}_i}{2}          \pm (\hat{u}_i-\hat{l}_i) \sqrt{6t_i\log T} \\
  \end{array}
 \end{align*}
 where $\hat{l}_i=\min_j r_{ij},\hat{u}_{i}=\max_j r_{ij}$ are the MLEs of $l_i, u_i$.
 Let
 $\alpha\in [0,1]$ and $C\in \R^+$ be unknown problem-dependent constants.
 When $t_i\geq 2$,
 U/LCB1-Uniform has a worst-case polynomial, best-case constant cumulative regret upper bound per arm,
 which converges to $1+2C$ when $\alpha\to 1$:
 \begin{align*}
  \textstyle\frac{24(u_i-l_i)^2 (1-\alpha)^2 \log T}{\Delta_i^2} + 1 + 2C +  \frac{(1-\alpha)T(T+1)(2T+1)}{3}
  .
 \end{align*}
\end{theo}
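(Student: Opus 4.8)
The plan is to follow the regret-decomposition template that \citet{auer2002finite} use for UCB1, modified in three places: the per-arm estimators are the order statistics $\hat l_i=\min_j r_{ij}$, $\hat u_i=\max_j r_{ij}$ (the MLEs, by Theorem~\ref{theo:mle-uniform}) rather than sample means; the confidence radius $w_i:=(\hat u_i-\hat l_i)\sqrt{6t_i\log T}$ is itself a random function of the observed rewards rather than a fixed $c\sqrt{2\log T/t_i}$; and $w_i$ does not automatically contract in $t_i$, so the argument must be arranged so that a finite (polynomial, and in the degenerate case constant) bound still falls out. Throughout I work with rewards $\rr_i$ (negating heuristic values where needed) and write $\mu_i=\tfrac{l_i+u_i}{2}$, $\hat\mu_i=\tfrac{\hat u_i+\hat l_i}{2}$, and $\Delta_i=|\mu_i-\mu_{i^*}|>0$ for a suboptimal arm $i$. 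By Definition~\ref{def:cr} the cumulative regret equals $\sum_i\Delta_i\,\E[t_i(T)]$, so it suffices to bound the per-arm count $\E[t_i(T)]$ by the claimed expression and then sum.

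First I would assemble the uniform-specific facts. A short order-statistics computation shows that $\hat l_i-l_i$ and $u_i-\hat u_i$ are each equal in law to $(u_i-l_i)$ times a $\mathrm{Beta}(1,t_i)$ variable, whence the sharp tails $P(\hat l_i-l_i>\varepsilon)=P(u_i-\hat u_i>\varepsilon)=(1-\tfrac{\varepsilon}{u_i-l_i})^{t_i}$ and the exact unbiasedness $\E[\hat\mu_i]=\mu_i$; likewise the sample range satisfies $0\le\hat u_i-\hat l_i\le u_i-l_i$ surely and $\hat u_i-\hat l_i=(u_i-l_i)R_{t_i}$ with $R_{t_i}\sim\mathrm{Beta}(t_i-1,2)$ (which is exactly why $t_i\ge 2$ is required for $\hat u_i>\hat l_i$), so the \emph{range-collapse} probability $P(\hat u_i-\hat l_i<\delta(u_i-l_i))$ is explicitly controlled and decays in $t_i$ for every fixed $\delta$. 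From the first two tails I would verify that the \emph{coverage event} $E_{i,t}=\{\,|\hat\mu_i-\mu_i|\le w_i\,\}$ — i.e.\ that $[\text{LCB1-Uniform}_i,\text{UCB1-Uniform}_i]$ traps $\mu_i$ — fails only with small probability, and I would define the problem-dependent constant $\alpha\in[0,1]$ as a uniform-in-$i$, uniform-in-$t_i\ge 2$ floor on the probability of the \emph{favorable event} on which both $E_{i,t}$ holds and the range estimate is controlled relative to $u_i-l_i$; $\alpha$ is then forced close to $1$ precisely when the reward laws are nearly degenerate (the regime relevant to classical planning, where $h(s)$ is a deterministic value at a leaf, and in the fully degenerate case $\hat u_i=\hat l_i$, $w_i\equiv0$, and the rule is pure exploitation). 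The constant $C$ is taken to absorb the residual $\sum_{t\ge2}P(E_{i,t}^c)$.

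With these in place the bound splits into two regimes. On the favorable event I run the optimism argument: if suboptimal $i$ is selected at trial $t$ after $t_i$ pulls then $\text{LCB1-Uniform}_i\le\text{LCB1-Uniform}_{i^*}$, and feeding in the two coverage inequalities forces $\Delta_i\le 2w_i$; combined with the control on the range estimate and the exact form of $w_i$ this must be turned into the pull-count cap $t_i\le\tfrac{24(u_i-l_i)^2(1-\alpha)^2\log T}{\Delta_i^2}$, the first summand. On the complementary event (mis-coverage or range-collapse) I bound the per-trial regret crudely; since the collapse probabilities at small pull counts do not vanish as $T$ grows, accumulating their worst-case trial-by-trial contributions over $t=1,\dots,T$ produces the $\sum_t t^2$-type cubic term $\tfrac{(1-\alpha)T(T+1)(2T+1)}{3}$, while the two forced initial pulls per arm together with the residual $C$ give the additive $1+2C$. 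Summing the three pieces gives the claimed bound, and letting $\alpha\to1$ annihilates both the logarithmic and the cubic term, leaving $1+2C$.

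\textbf{Main obstacle.} The delicate point — and the reason the clean $O(\log T/\Delta_i^2)$ UCB1 proof does not transfer verbatim — is that $w_i=(\hat u_i-\hat l_i)\sqrt{6t_i\log T}$ is random \emph{and} non-contracting in $t_i$, so it does not self-bound the number of pulls the way UCB1's radius does; the cap on $t_i$ must instead be extracted on the favorable event by playing the coverage inequality $\Delta_i\le 2w_i$ against the range-collapse control and a lower bound on $\Delta_i/(u_i-l_i)$, while the realizations with an atypically small sample range escape that argument and are what must be charged to the (necessarily loose) polynomial term. Getting the two regimes to meet with the stated powers of $(1-\alpha)$, and certifying $\alpha$ and $C$ as honest problem constants rather than $T$-dependent quantities, is the main bookkeeping hurdle; the remainder is the standard bandit calculation together with the order-statistics tail bounds above.
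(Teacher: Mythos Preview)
Your template matches the paper's (the Auer--Cesa-Bianchi--Fischer decomposition of $\E[t_i(T)]$ into a deterministic threshold plus a tail of failure probabilities), but the two load-bearing ingredients are different from what the paper uses. For the concentration of $\hat\mu_i=(\hat u_i+\hat l_i)/2$ the paper applies the \emph{bounded difference inequality} (McDiarmid), which produces a radius proportional to $(u_i-l_i)\sqrt{t_i\log T}$; the unknown $u_i-l_i$ is then swapped for the observable $\hat u_i-\hat l_i$ via lemmas on the CDF of the range ratio $r_i=(\hat u_i-\hat l_i)/(u_i-l_i)$ together with a union bound, and $\alpha$ is \emph{defined} as the critical value $\alpha=P(r_i<X)$ of that ratio. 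The coefficient $6$ and the restriction $t_i\ge 2$ are not incidental: the paper says they are forced by the condition that $C$ be finite in this McDiarmid-plus-range-ratio calculation. Your sharper Beta-order-statistics tails for $\hat l_i-l_i$, $u_i-\hat u_i$, and $R_{t_i}\sim\mathrm{Beta}(t_i-1,2)$ are perfectly valid concentration tools, but they do not reproduce the $\sum_j c_j^2$ structure that yields $6$, and your definition of $\alpha$ as a ``uniform floor on the favorable event'' is not the paper's quantile; as a result the $(1-\alpha)$ and $(1-\alpha)^2$ factors in the stated bound will not emerge from your argument in the stated form.

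There is also a concrete gap in your derivation of the logarithmic summand. On the coverage event you correctly obtain $\Delta_i\le 2w_i=2(\hat u_i-\hat l_i)\sqrt{6t_i\log T}$, but since $w_i$ is \emph{increasing} in $t_i$ this rearranges to $t_i\ge \Delta_i^2/\bigl(24(\hat u_i-\hat l_i)^2\log T\bigr)$ --- a \emph{lower} bound on $t_i$, not the upper bound $t_i\le 24(u_i-l_i)^2(1-\alpha)^2\log T/\Delta_i^2$ you assert. No control on the sample range fixes the direction: substituting either $\hat u_i-\hat l_i\le (u_i-l_i)$ or any range-collapse inequality into $\Delta_i\le 2w_i$ still yields a lower bound on $t_i$, and ``a lower bound on $\Delta_i/(u_i-l_i)$'' is a fixed constant that cannot flip the inequality either. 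The paper sidesteps this by placing the $(1-\alpha)$ dependence upstream in the replacement step (range-ratio CDF plus union bound), not by trying to cap $t_i$ directly from the optimism inequality. You correctly name this as the main obstacle, but the resolution you sketch does not close it.
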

\begin{proof}
 \emph{(Sketch of proof in appendix.)}
 We apply \emph{bounded difference inequality} \citep{boucheron2013concentration}
 to derive a confidence bound of $\frac{\hat{u}_i+\hat{l}_i}{2}$.
 It contains an unknown value $u_i-l_i$, which is an issue.
 Therefore,
 we use lemmas about
 the critical value $\alpha = P(r_i < X)$ of $r_i=\frac{\hat{u}_i-\hat{l}_i}{u_i-l_i}$,
 its CDF,
 and union-bound
 to derive a looser upper-bound.
 The coefficient 6 and $t_i\geq 2$ are derived from the condition that makes $C$ finite.
\end{proof}

Just like UCB1-Normal2, UCB1-Uniform is spread-aware.
The second term is scaled by the support range $\hat{u}_i-\hat{l}_i$,
similar to the empirical variance $\hat{\sigma}_i$ in UCB1-Normal2.
A larger spread indicates more chance that the next pull results in a wildly different, smaller $h$,
while a smaller spread indicates a plateau,
a region of flat $h$ landscape \citep{Coles07} that hinders the search progress, particularly when $\hat{u}_i-\hat{l}_i=0$.
Penalizing a small spread gives UCB1-Uniform/Normal2 an ability to avoid plateaus.

However,
UCB1-Uniform can not only avoid, but also escape plateaus quickly.
For example, in \refig{fig:plateau},
the two plateaus are equally informed ($u_1=u_2=6, l_1=l_2=4$) and $t_1>t_2$,
thus it keeps searching plateau 1 in a depth-first manner,
rather than distributing the effort and failing to explore either one sufficiently.

\begin{figure}[tb]
 \includegraphics{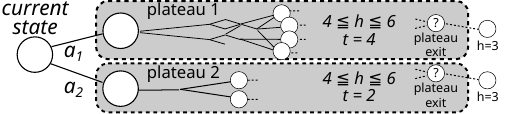}
 \caption{
 Given equally informative plateaus,
 UCB1-Uniform focuses on one plateau to find an exit quickly.}
 \label{fig:plateau}
\end{figure}

\section{Experimental Evaluation}
\label{sec:experiments}

We first evaluated the proposed algorithm implemented on Pyperplan \citep{pyperplan} by
counting the number of instances solved under 10,000 node evaluations
over
a subset of the International Planning Competition benchmark domains,
selected for compatibility with the set of PDDL extensions supported by Pyperplan
(772 problem instances across 24 domains in total).
We focus on node evaluations to improve the reproducibility by removing the effect of low-level implementation detail.
See the appendix for the results
controlled by expansions and the runtime.

We evaluated various algorithms with $\ff$, $\ad$, $\hmax$, and $\gc$ (goal count) heuristics \citep{FikesHN72},
and our analysis focuses on $\ff$.
We included $\gc$ despite its uninformativeness because it can be used in environments without domain descriptions,
e.g., in the planning-based approach \citep{lipovetzky2015classical} to the Atari environment \citep{bellemare2015arcade}.
We ran each configuration with 5 random seeds and report the average number of problem instances solved.
We do not evaluate \astar, \uct, and \uct* as we focus on agile search settings.
We included the evaluations with Deferred Evaluation (DE) and Preferred Operators (PO) (see appendix),
following \citet{schulte2014balancing}.

We then evaluated some of the algorithms reimplemented in Fast Downward \citep{Helmert2006}
on IPC2018 instances with $\ff$
under the agile IPC setting (5 minute, 8GB memory),
using Intel Xeon 6258R CPU @ 2.70GHz.

\begin{table*}[tb]
 \centering
\setlength{\tabcolsep}{0.3em}
\begin{adjustbox}{max width=0.7\linewidth}
  \begin{tabular}{l*{6}{c}}
  \toprule
  \multicolumn{1}{r}{$h=$}
  & \multicolumn{1}{c}{$\ff$}
  & \multicolumn{1}{c}{$\ad$}
  & \multicolumn{1}{c}{$\hmax$}
  & \multicolumn{1}{c}{$\gc$}
  & \multicolumn{1}{c}{$\ff$+PO}
  & \multicolumn{1}{c}{$\ff$+DE+PO}
  \\
\midrule
\gbfs(Pyperplan/FD) & 538/539 & 518/517 & 224/226 & 354/349 & $\dagger$/539  & $\dagger$/$\ddagger$ \\
Softmin-Type(h) & 576 & 542.6 & 297.2 & 357.6 & 575.8  & $\ddagger$ \\
\midrule
GUCT & 412 & 397.8 & 228.4 & 285.2 & 454.2 & 440.4 \\
GUCT* & 459.4 & 480.8 & 242.2 & 312.2 & 496.2 & 471.8 \\
GUCT-Normal & 283.4 & 265 & 212 & 233.4 & 372.4 & 381.6 \\
GUCT*-Normal & 318.8 & 300 & 215.2 & 246.2 & 378.05 & 386.9 \\
GUCT-Normal2 & 582.95 & 538 & 316.6 & 380.6 & 623.2 & 581.8 \\
GUCT*-Normal2 & 567.2 & 533.8 & 263 & 341.2 & 619.8 & 570.6 \\
\textbf{GUCT-Uniform (ours)} & \textbf{606.4} & \textbf{563.4} & \textbf{455.6} & \textbf{492.2} & \textbf{635.6} & \textbf{600.8} \\
CHK-Uniform & 375.4 & 338.8 & 224.8 & 296.6 & 454.8 & 458.2 \\
GUCT+-Normal2 & 578 & 550.4 & 442.4 & 490.6 & 630.6 & 582.2 \\
\midrule
MaxSearch & 253.75 & 243.4 & 260 & 255.2 & 368.6 & 355.6 \\
RobustUCT & 267.8 & 270.8 & 234 & 231.8 & 403 & 435.2 \\
ThresholdAscent & 162.4 & 163.8 & 170.4 & 164.4 & 165.8 & 172.2 \\
  \bottomrule
 \end{tabular}
\end{adjustbox}
 \hspace{1em}
\begin{adjustbox}{max width=0.23\linewidth}
\begin{tabular}{r|rrrrr}
\toprule
& domain    & \gbfs          & SM            &  N2       & Uni       \\ \midrule
\multirow{11}{*}{\rotatebox{90}{Instances solved}}
 & agricola  & 9.0           & 10.2         & 9.4           & \textbf{11.6} \\
 & caldera   & 4.0           & \textbf{7.0} & 6.4           & 5.8           \\
 & data-net  & 4.0           & \textbf{8.4} & 8.2           & 7.0           \\
 & flashfill & \textbf{9.0}  & 8.8          & 7.2           & 6.8           \\
 & nurikabe  & 7.0           & 6.2          & \textbf{8.4}  & 7.6           \\
 & org-syn   & 9.0           & 8.8          & \textbf{9.2}  & 6.0           \\
 & settlers  &               & \textbf{5.4} & 2.4           & 2.6           \\
 & snake     & 5.0           & 5.0          & 15.4          & \textbf{19.0} \\
 & spider    & 8.0           & 8.2          & \textbf{9.2}  & 8.6           \\
 & termes    & \textbf{12.0} & 11.6         & 5.8           & 5.0           \\   \cmidrule{2-6}
 & total     & 67.0          & 79.6         & \textbf{81.6} & 80.0          \\ \midrule
\multirow{11}{*}{\rotatebox{90}{IPC score}}
 & agricola  & 1.9          & 3.4          & 2.0          & \textbf{6.1}  \\
 & caldera   & 2.9          & 5.0          & 5.1          & \textbf{5.3}  \\
 & data-net  & 3.5          & 4.5          & \textbf{5.6} & 4.9           \\
 & flashfill & 6.2          & \textbf{6.8} & 5.4          & 4.8           \\
 & nurikabe  & 6.4          & 5.4          & \textbf{6.9} & 6.5           \\
 & org-syn   & \textbf{7.2} & 6.8          & 6.7          & 5.0           \\
 & settlers  &              & \textbf{2.6} & 1.6          & 2.3           \\
 & snake     & 2.9          & 3.3          & 9.1          & \textbf{12.5} \\
 & spider    & 2.2          & 3.1          & \textbf{3.4} & \textbf{3.4}  \\
 & termes    & \textbf{6.4} & \textbf{6.2} & 2.6          & 2.5           \\\cmidrule{2-6}
 & total     & 39.5         & 47.0         & 48.6         & \textbf{53.2} \\\bottomrule
\end{tabular}
\end{adjustbox}
\caption{
 Best algorithms in \textbf{bold}.
 \textbf{(left)}
 The number of problem instances solved with less than 10,000 node evaluations;
 each number represents an average over 5 seeds.
 PO/DE stand for Preferred Operators/Deferred Evaluation.
 Pyperplan supports PO only for $\ff$.
 $\dagger$: Data missing due to the lack of support of PO for \gbfs in Pyperplan.
 $\ddagger$: Data missing because DE in Fast Downward measures node evaluations differently.
 \textbf{(right)}
 Number of instances solved and IPC scores on IPC 2018 instances,
 using $\ff$ under 5 min time limit and 8GB memory limit, averaged over 3 seeds.
 For caldera and organic-synthesis,
 we used their action-splitting variants \citep{areces2014optimizing}
 provided by the organizers.
 `SM' stands for Softmin-Type(h),
 `N2' stands for \guct-Normal2,
 `Uni' stands for \guct-Uniform.
 }
 \label{tbl:main-table}
\end{table*}

\paragraph{Queue-based}

We first evaluated \lsota queue-based search algorithms
and compared them against our proposed \guct-Uniform.
In \reftbl{tbl:main-table},
\gbfs (Pyperplan/FD) shows the results of \gbfs implemented in Pyperplan and FastDownward, respectively.
We evaluated them both to confirm the effect of implementation difference.
We next evaluated Softmin-Type(h) \citep{kuroiwa2022biased},
a recent \lsota diversified search algorithm for classical planning,
from the original C++ implementation available online.
GUCT-Uniform outperformed
\gbfs and Softmin-Type(h) by 67.8 and 33.2 instances, respectively.

\paragraph{$*$-Variants}

We next compared various bandit algorithms with Monte Carlo backup and Full Bellman backup
to analyze the effect of backup differences.
In \reftbl{tbl:main-table},
\guct is a \guct that uses the original UCB1 bandit for action selection.
Note that
this does not have the ``normalization'' feature \citep{schulte2014balancing}
that turned out to be harmful \citep{wissow2024scale}.
\guct-Normal uses UCB1-Normal \citep{auer2002finite}, and
\guct-Normal2 uses UCB1-Normal2 \citep{wissow2024scale}.
The $*$-variants (\guct*-Normal, etc.) use Full Bellman backup instead of Monte Carlo backup.

While $*$-variants tend to improve the performance over the base Monte Carlo variants,
it happens only when the base algorithm is non-performant.
\guct*-Normal2 performs significantly worse than \guct-Normal2,
and \guct/* and \guct/*-Normal are vastly inferior to \guct/*-Normal2.
Our proposed GUCT-Uniform outperformed both $*$- and base variants:
GUCT*, *-Normal, *-Normal2, GUCT, -Normal, and -Normal2 by +194.4, +323, +23.4, +147, +287.6, and +39.2 respectively.

These results demonstrate the benefit of
selecting a backup method that is theoretically consistent with the given bandit.
The Full Bellman Backup estimates
Uniform distributions with unknown support,
and negatively affects the performance of \guct*-Normal2's Gaussian bandit
with its conflicting assumptions.
UCB1-Uniform, which does not have such theoretical dissonance,
managed to extract the best performance while using Full Bellman backup.

\paragraph{CHK-Uniform}

We added
CHK-Uniform \citep{cowan2015asymptotically},
an asymptotically optimal bandit
for
Uniform distributions.
To our knowledge,
CHK-Uniform is the only bandit that works on uniform distributions with unknown supports
and is asymptotically optimal,
providing a baseline for UCB1-Uniform.
Our UCB1-Uniform significantly outperformed CHK-Uniform.
This is another interesting case of a non-asymptotically-optimal bandit outperforming an asymptotically optimal one,
such as UCB1-Uniform vs. CHK-Uniform and UCB1-Normal2 vs. UCB1-Normal.
A deeper theoretical investigation into this phenomena is an important avenue of future work.

\paragraph{+-Variants}

GUCT$^{+}$-Normal2 
uses the backup method explained in \refsec{sec:issues-minimum}
that estimates the
maximum of Gaussian RVs,
modified for minimization.
While this backup sometimes improved the results from \guct/*-Normal2,
the improvement depends on the heuristics and they were overall outperformed by GUCT-Uniform.
The likely explanation is that the Maximum-of-Gaussians method is not accurate
for combining the estimates for more than 2 arms.

\paragraph{Max-$k$ Bandits}

Our work can be confused with 
the \emph{Max $k$-Armed Bandit} framework
\citep{cicirello2004heuristic, cicirello2005max, streeter2006asymptotically, streeter2006simple, carpentier2014extreme, achab2017max}
that optimizes \emph{extreme regret} $\max_i \E[\max_{t=1}^T \rr_{it}] - \E[\max_{t=1}^T \rr_{I_tt}]$,
where $I_t$ is the arm pulled at $t$.
While both approaches use EVTs,
UCB1-Uniform is not a Max-$k$ bandit algorithm and has many theoretical/practical/conceptual differences.
Existing Max-$k$ bandits
target long-tail distributions while we target short-tail distributions,
they primarily use block maxima EVTs, and
they fail to align conceptually with classical planning
(due to space,
this discussion continues in the appendix).
We focus here on the experimental results.

We evaluated GUCT variants that use three Max-$k$ bandits for action selection:
\emph{Threshold Ascent} \citep{streeter2006simple}, 
\emph{RobustUCB} \citep{bubeck2013bandits}, 
\emph{MaxSearch} \citep{kikkawa2022materials}. 
All hyperparameters are based on the values suggested by their authors.
\reftbl{tbl:main-table} shows that
these algorithms significantly underperformed.

\paragraph{Agile Experiments with C++}

\reftbl{tbl:main-table} (right) shows the results comparing C++ implementations of \guct-Uniform and other algorithms on Fast Downward.
UCB1-Uniform was on par with Softmin-Type(h) and \guct-Normal2 in terms of the number of solved instances,
and outperform them on the IPC score $\sum_i \min (1, 1- \frac{\log t_i}{\log 300})$,
where $t_i$ is the runtime of each algorithm solving an instance $i$.

\section{Conclusion}
\label{sec:conclusion}

Previously, statistical estimates for guiding MCTS in classical planning
did not respect the natural properties of heuristic functions, i.e.\ that they have unknown, half-bounded support,
leading to overspecification (UCB1: known finite support) or underspecification (Gaussian bandit: entire $\R$).
Also,
why Monte Carlo backup (averaging)
was used
for minimization/maximization tasks
was unclear.
In searching for a theoretically justified backup for agile planning,
we modeled the rewards
with Peaks-Over-Threshold Extreme Value Theory (POT EVT), which captures the finer details of heuristic search.
This led to our new bandit, UCB1-Uniform, which uses the MLE of the Uniform distribution
to guide action selection.
The resulting algorithm outperformed
\gbfs,
\guct-Normal2,
asymptotically optimal uniform bandit CHK-Uniform,
a \lsota diversified search algorithm Softmin-Type(h),
and Max-$k$ bandits.

\section*{Acknowledgments}

This work was supported through DTIC contract FA8075-18-D-0008, Task
Order FA807520F0060, Task 4 - Autonomous Defensive Cyber Operations
(DCO) Research \& Development (R\&D).

\appendix

\fontsize{9.5pt}{10.5pt}
\selectfont

\end{document}